\title{Near-Optimal Algorithms for Online Matrix Prediction}
  \author{Elad Hazan\thanks{Technion - Israel Institute of Technology. \texttt{ehazan@ie.technion.ac.il}.}
\and
Satyen Kale\thanks{IBM T. J. Watson Research Center. \texttt{\small sckale@us.ibm.com}.}
\and
 Shai Shalev-Shwartz\thanks{Hebrew University. \texttt{\small shais@cs.huji.ac.il}.}
 }
\date{}
\newcommand{\ignore}[1]{}
\newtheorem{Def}{Definition}
\newtheorem{Thm}{Theorem}
\newtheorem{Lem}[Thm]{Lemma}
\newtheorem{Cor}[Thm]{Corollary}
\newcommand{\sym}{\ensuremath{{\mathsf{sym}}}}
\newcommand{\abs}{\ensuremath{{\mathsf{abs}}}}
\def\reals{{\mathbb R}}
\def\sS{{\mathbb S}}
\newcommand\E{\mbox{\bf E}}
\newcommand\av{\mathop{\mathbb E}}
\def\bB{\mathbf{B}}
\def\bv{\mathbf{v}}
\def\bA{\mathbf{A}}
\def\bD{\mathbf{D}}
\def\bI{\mathbf{I}}
\def\bL{\mathbf{L}}
\def\bN{\mathbf{N}}
\def\bP{\mathbf{P}}
\def\bQ{\mathbf{Q}}
\def\bT{\mathbf{T}}
\def\bU{\mathbf{U}}
\def\bV{\mathbf{V}}
\def\bW{\mathbf{W}}
\def\bw{\mathbf{w}}
\def\bX{\mathbf{X}}
\def\bY{\mathbf{Y}}
\def\sgn{\text{sgn}}
\def\eye{\mathbf{I}}
\def\bone{\mathbf{1}}
\def\bzero{\mathbf{0}}
\def\bSigma{\mathbf{\Sigma}}
\def\balpha{{\boldsymbol \alpha}}
\def\mL{{\mathcal L}}
\def\mW{{\mathcal W}}
\def\epsilon{\varepsilon}
\def\tsum{{\textstyle \sum}}
\newcommand{\K}{\ensuremath{\mathcal K}}
\def\Tr{\mathrm{Tr}}
\newcommand{\thmref}[1]{Theorem~\ref{#1}}
\newcommand{\lemref}[1]{Lemma~\ref{#1}}
\renewcommand{\eqref}[1]{Eq.~(\ref{#1})}
\newcommand{\secref}[1]{Section~\ref{#1}}
\newcommand{\conv}{\mathrm{conv}}
\newcommand{\half}{\frac{1}{2}}
\newcommand{\thalf}{\tfrac{1}{2}}
\newenvironment{myalgo}[1]%
{
\begin{center}
\begin{boxedminipage}{0.8\linewidth}
\begin{center}
\textbf{\texttt{#1}}
\end{center}
\rm
\begin{tabbing}
....\=...\=...\=...\=...\=  \+ \kill
} %
{\end{tabbing} 
\end{boxedminipage} \end{center} 
}
\begin{document}
\maketitle

\begin{abstract}
  In several online prediction problems of recent interest the
  comparison class is composed of matrices with bounded entries. For
  example, in the online max-cut problem, the comparison class is
  matrices which represent cuts of a given graph and in online
  gambling the comparison class is matrices which represent
  permutations over $n$ teams. Another important example is online
  collaborative filtering in which a widely used comparison class is
  the set of matrices with a small trace norm. In this paper we
  isolate a property of matrices, which we call
  $(\beta,\tau)$-decomposability, and derive an efficient online
  learning algorithm, that enjoys a regret bound of
  $\tilde{O}(\sqrt{\beta\,\tau\,T})$ for all problems in which the
  comparison class is composed of $(\beta,\tau)$-decomposable
  matrices. By analyzing the decomposability of cut matrices,
  triangular matrices, and low trace-norm matrices, we derive near optimal
  regret bounds for online max-cut, online gambling, and online
  collaborative filtering. In particular, this resolves (in the
  affirmative) an open problem posed by \citet{Jake,KNW}.  Finally, we
  derive lower bounds for the three problems and show that our upper
  bounds are optimal up to logarithmic factors. In particular, our lower bound
  for the online collaborative filtering problem resolves another open
  problem posed by \citet{ShamirSr11}.
\end{abstract}

\section{Introduction}

We consider online learning problems in which on each round the
learner receives $(i_t,j_t) \in [m] \times [n]$ and should return a
prediction in $[-1,1]$.  For example, in the online collaborative
filtering problem, $m$ is the number of users, $n$ is the number of
items (e.g., movies), and on each online round the learner should
predict a number in $[-1,1]$ indicating how much user $i_t \in [m]$ likes
item $j_t \in [n]$. Once the learner makes the prediction, the
environment responds with a loss function, $\ell_t : [-1,1] \to
\mathbb{R}$, that assesses the correctness of the learner's prediction. 

A natural approach for the learner is to maintain a matrix $\bW_t \in
[-1,1]^{m \times n}$, and to predict the corresponding entry,
$W_t(i_t,j_t)$.  The matrix is updated based on the loss function
and the process continues. 

Without further structure, the above setting is equivalent to $mn$ independent prediction problems - one per user-item
pair. However, it is usually assumed that there is a relationship
between the different matrix entries - e.g. similar users prefer
similar movies. This can be modeled in the online learning setting by
assuming that there is some fixed matrix $\bW$, in a restricted class
of matrices $\mW \subseteq [-1,1]^{m\times n}$, such that the strategy
which always predicts $W(i_t,j_t)$ has a small cumulative loss. A
common choice for $\mW$ in the collaborative filtering application is
to be the set of matrices with a trace norm of at most $\tau$ (which
intuitively requires the prediction matrix to be of low rank). As
usual, rather than assuming that some $\bW \in \mW$ has a small
cumulative loss, we require that the regret of the online learner with
respect to $\mW$ will be small. Formally, after $T$ rounds, the regret
of the learner is
\[ \text{Regret}\ :=\ \sum_{t=1}^T \ell_t(W_t(i_t,j_t)) - \min_{\bW \in \mW} \sum_{t=1}^T \ell_t(W(i_t, j_t)),\]
and we would like the regret to be as small as possible.

A natural question is what properties of $\mW$ enables us to derive an
\emph{efficient} online learning algorithm that enjoys low regret, and
how does the regret depend on the properties of $\mW$. In this paper
we define a property of matrices, called
$(\beta, \tau)$-decomposability, and derive an efficient online
learning algorithm that enjoys a regret bound of
$\tilde{O}(\sqrt{\beta\,\tau\,T})$ for any problem in which $\mW
\subset [-1,1]^{m \times n}$ and every matrix $\bW \in \mW$ is
$(\beta, \tau)$-decomposable. Roughly speaking, $\bW$ is
$(\beta, \tau)$-decomposable if a symmetrization of it can be written
as $\bP-\bN$ where both $\bP$ and $\bN$ are positive semidefinite,
have sum of traces bounded by $\tau$, and have diagonal elements bounded by
$\beta$.

We apply this technique to three online learning problems. 
\begin{enumerate}
\item \textbf{Online max-cut}: On each round, the learner receives a pair
of graph nodes $(i,j) \in [n]\times[n]$, and should decide whether
there is an edge connecting $i$ and $j$. Then, it receives a binary
feedback. The comparison class is the set of all cuts of the graph,
which can be encoded as the set of matrices $\{\bW_A : A \subset
[n]\}$, where $W_A(i,j)$ indicates if $(i,j)$ crosses the cut
defined by $A$ or not. It is possible to achieve a regret of
$O(\sqrt{n T})$ for this problem by a non-efficient algorithm (simply
refer to each $A$ as an expert and apply a prediction with expert
advice algorithm).  Our algorithm yields a nearly optimal regret bound of
$O(\sqrt{n \log(n) T})$ for this problem. This is the first
\emph{efficient} algorithm that achieves near optimal regret. 
\item \textbf{Online Gambling}: On each round, the learner receives a
  pair of teams $(i,j) \in [n] \times [n]$, and should predict whether
  $i$ is going to beat $j$ in an upcoming matchup or vice versa. The
  comparison class is the set of permutations over the teams, where a
  permutation will predict that $i$ is going to beat $j$ if $i$
  appears before $j$ in the permutation. Permutations can be encoded
  naturally as matrices, where $W(i,j)$ is either $1$ (if $i$
  appears before $j$ in the permutation) or $0$. Again, it is possible
  to achieve a regret of $O(\sqrt{n \log(n) T})$ by a non-efficient
  algorithm (that simply treats each permutation as an expert). Our
  algorithm yields a nearly optimal regret bound of $O(\sqrt{n
    \log^3(n) T})$. This resolves an open problem posed in
  \citet{Jake,KNW}. Achieving this kind of regret bound was widely considered {\em intractable}, since computing the best permutation in hindsight is exactly the {\bf NP}-hard minimum feedback arc set problem. In fact, \citet{kanade} tried to show computational hardness for this problem by reducing the problem of online agnostic learning of halfspaces in a restricted setting to it. This paper shows that the problem is in fact tractable.

\item \textbf{Online Collaborative Filtering}: We already mentioned
  this problem previously. We consider the comparison class 
  $\mW =
  \{\bW \in [-1,1]^{m \times n} : \|\bW\|_\star \le \tau\}$, where
  $\|\cdot\|_\star$ is the trace norm. Without loss of generality
  assume $m \le n$. Our algorithm yields a nearly optimal regret bound
  of $O(\sqrt{\tau \sqrt{n} \log(n) T})$.  Since for this problem one
  typically has $\tau = \Theta(n)$, we can rewrite
  the regret bound as $O(\sqrt{n^{3/2} \log(n) T})$.  In contrast, a
  direct application of the online mirror descent framework to this
  problem yields a regret of $O(\sqrt{\tau^2 T}) = O(\sqrt{n^2
    T})$. The latter is a trivial bound since the bound becomes
  meaningful only after $T\ge n^2$ rounds (which means that we saw the
  entire matrix). 

  Recently, \citet{CesaBianchiShamir2011} proposed a rather different
  algorithm with regret bounded by $O({\tau \sqrt{n}})$ but under the
  additional assumption that each entry $(i, j)$ is seen only once. In
  addition, while both the runtime of our method and the
  \citet{CesaBianchiShamir2011} method is polynomial, the runtime of
  our method is significantly smaller: for $m \approx n$, each
  iteration of our method can be implemented in $\tilde{O}(n^3)$ time
  (see Section~\ref{sec:faster-alg}), whereas the runtime of each
  iteration in their algorithm is at least $\Omega(n^4)$ and can be
  significantly larger depending on the specific
  implementation.\footnote{Specifically, each iteration in their
    algorithm requires solving $n$ empirical risk minimization
    problems over the hypothesis space of $m \times n$ matrices with a
    bounded trace norm (in their notation, to obtain the optimal
    bound, one should set $T=n^2$ and $\eta \ge 1/n$, and then should
    solve $\eta T$ empirical risk minimization problems per
    iteration). It is not clear what is the optimal runtime of solving
    each such empirical risk minimization problem. We believe that it
    is impossible to obtain a solver which is significantly faster
    than $n^4$.}
\end{enumerate}

Finally, we derive (nearly) matching lower
  bounds for the three problems. In particular, our lower bound for
  the online collaborative filtering problem implies that the sample
  complexity of learning matrices with bounded entries and trace norm
  of $\Theta(n)$ is $\Omega(n^{3/2})$. This matches
  an upper bound on the sample complexity derived by \citet{ShamirSh11}
  and solves an open problem posed by \citet{ShamirSr11}. 

\section{Problem statements and main results}  \label{sec:main}

We start with the definition of $(\beta,\tau)$-decomposability. For
this, we first define a symmetrization operator.
\begin{Def}[Symmetrization]
	Given an $m \times n$ {\em non-symmetric} matrix $\bW$ its symmetrization is the $(m+n) \times (m+n)$ matrix:
$$ \sym(\bW) :=  \left[ \begin{array}{cc}
    \bzero & \bW \\
    \bW^\top & \bzero  \end{array} \right].$$
If $m = n$ and $\bW$ is symmetric, then $\sym(\bW) := \bW$. 
\end{Def}
The main property of matrices we rely on is $(\beta,
\tau)$-decomposability, which we define below. 
\begin{Def}[$(\beta, \tau)$-decomposability]
  An $m \times n$ matrix $\bW$ is $(\beta, \tau)$-decomposable if
  there exist symmetric, positive semidefinite matrices $\bP, \bN \in
  \mathbb{R}^{p \times p}$, where $p$ is the order of $\sym(\bW)$,
  such that the following conditions hold:
	\begin{align*}
		\sym(\bW)\ &=\ \bP - \bN,\\
		\Tr(\bP) + \Tr(\bN)\ &\leq\ \tau,\\
		\forall i \in [p]: P(i,i), N(i,i)\ &\leq\ \beta.
	\end{align*}
We say that a set of matrices $\mW$ is $(\beta,\tau)$-decomposable if
every matrix in $\mW$ is $(\beta,\tau)$-decomposable. 
\end{Def}
In the above, the parameter $\beta$ stands for a \textbf{b}ound on the
diagonal elements of $\bP$ and $\bN$, while the parameter $\tau$
stands for the \textbf{t}race of $\bP$ and $\bN$.  It is easy to
verify that if $\mW$ is $(\beta,\tau)$-decomposable then so is its
convex hull, $\conv(\mW)$. Throughout this paper, we assume for technical convenience that $\beta \geq 1$.\footnote{The condition $\beta \geq 1$ is not a serious restriction since for any $(\beta, \tau)$-decomposition of $\bW$, viz. $\sym(\bW) = \bP - \bN$, we have $\beta \geq |P(i, j)|, |N(i, j)|$  for all $(i, j)$ since $\bP, \bN \succeq \bzero$; and so $2\beta \geq |P(i, j) - N(i, j)| = |W(i, j)|$. Thus, if we make the reasonable assumption that there is some $\bW \in \mW$ with $|W(i, j)| = 1$ for some $(i, j)$, then $\beta \geq \frac{1}{2}$ is necessary.} 

There is an intriguing connection between the $(\beta, \tau)$-decomposition for a rectangular matrix $\bW$ and its max-norm and trace norm: the least possible $\beta$ in any $(\beta, \tau)$-decomposition exactly equals half the max-norm of $\bW$ (see Theorem~\ref{thm:max-norm-connection}), and the least possible $\tau$ in any $(\beta, \tau)$-decomposition exactly equals twice the trace-norm of $\bW$ (see Theorem~\ref{thm:trace-norm-connection}).

Our first contribution is a generic low regret algorithm for online
matrix prediction with a $(\beta,\tau)$-decomposable comparison
class. We also assume that all the matrices in the comparison class
have bounded entries.  Formally, we consider the following problem.

\begin{myalgo}{Online Matrix Prediction}
\textbf{parameters:} $\beta \ge 1$, $\tau \ge 0$, $G \ge 0$ \\
\textbf{input:} A set of matrices, $\mW \subset [-1,1]^{m \times n}$, 
which is $(\beta,\tau)$-decomposable \\
\textbf{for}~$t=1,2,\ldots,T$ \+ \\
adversary supplies a pair of indices $(i_t, j_t) \in [m] \times [n]$ \\
learner picks $\bW_t \in \conv(\mW)$ and outputs the prediction $W_t(i_t,
j_t)$ \\
adversary supplies a convex, $G$-Lipschitz, loss function $\ell_t:
[-1, 1] \rightarrow \mathbb{R}$ \\
learner pays $\ell_t(W_t(i_t,j_t))$
\end{myalgo}




\begin{Thm} \label{thm:Main}
There exists an efficient algorithm for Online Matrix Prediction which
enjoys the regret bound
\[
\text{Regret} ~\le~ 2G\sqrt{\tau \beta \log(2p) T} ,
\]
where $p$ is the order of $\sym(\bW)$ for any matrix $\bW \in \mW$.
\end{Thm}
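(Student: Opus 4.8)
The plan is to lift each comparison matrix to a single positive semidefinite matrix of order $2p$, reduce Online Matrix Prediction to online convex optimization over a body of PSD matrices with bounded trace and bounded diagonal, and run an entropy‑regularized online mirror descent (a matrix version of multiplicative weights) whose regret is analyzed with a \emph{local norm} bound. The conceptual heart is that the bounded‑diagonal property of the decomposition is exactly what controls the local norm of the gradients, turning what would be an $O(\tau)$ bound into the desired $O(\sqrt{\tau\beta})$ bound.

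\emph{The lift.} Fix $\bW\in\mW$ with a $(\beta,\tau)$‑decomposition $\sym(\bW)=\bP-\bN$, where $\sym(\bW)$ has order $p$. Form the block‑diagonal matrix
\[
\bX_\bW \;:=\; \left[\begin{array}{cc}\bP & \bzero\\ \bzero & \bN\end{array}\right]\ \in\ \R^{2p\times 2p};
\]
it is PSD, has $\Tr(\bX_\bW)=\Tr(\bP)+\Tr(\bN)\le\tau$, and every diagonal entry of it is a diagonal entry of $\bP$ or of $\bN$, hence at most $\beta$. For $(i,j)\in[m]\times[n]$ the prediction $W(i,j)=\sym(\bW)(i,m+j)=P(i,m+j)-N(i,m+j)$ is a fixed linear functional of $\bX_\bW$: with $\be_1,\dots,\be_{2p}$ the standard basis and
\[
\bA^{(i,j)} \;:=\; \thalf\bigl(\be_i\be_{m+j}^\tr+\be_{m+j}\be_i^\tr\bigr)\;-\;\thalf\bigl(\be_{p+i}\be_{p+m+j}^\tr+\be_{p+m+j}\be_{p+i}^\tr\bigr),
\]
one checks $W(i,j)=\langle\bX_\bW,\bA^{(i,j)}\rangle$ (the symmetric case $m=n$ is identical, dropping the $\bN$ block). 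Two facts about $\bA^{(i,j)}$ will be used: $\|\bA^{(i,j)}\|_{\mathrm{op}}=\thalf$, and its square $(\bA^{(i,j)})^2=\tfrac14\bigl(\be_i\be_i^\tr+\be_{m+j}\be_{m+j}^\tr+\be_{p+i}\be_{p+i}^\tr+\be_{p+m+j}\be_{p+m+j}^\tr\bigr)$ is diagonal and equals $\tfrac14$ on exactly four coordinates. Let $\K$ be the convex set of $2p\times 2p$ PSD matrices with trace at most $\tau$ and all diagonal entries at most $\beta$ (one may additionally intersect with the linear constraints $|\langle\bX,\bA^{(i,j)}\rangle|\le1$ so that the induced predictions lie in $[-1,1]$; since $\mW\subseteq[-1,1]^{m\times n}$ the set $\K$ anyway contains $\bX_\bW$ for every $\bW\in\mW$, so this costs nothing). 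The problem is now: at round $t$ choose $\bX_t\in\K$, predict $\langle\bX_t,\bA^{(i_t,j_t)}\rangle$, and suffer the convex loss $\bX\mapsto\ell_t(\langle\bX,\bA^{(i_t,j_t)}\rangle)$.

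\emph{Algorithm and regret.} Run online mirror descent over $\K$ with the von Neumann entropy regularizer $R(\bX)=\Tr(\bX\log\bX)$ and a suitable step size $\eta$ (and a suitable initialization such as $\bX_1=\tfrac{\tau}{2p}\bI$): each step takes a matrix‑exponential step in the dual and Bregman‑projects back onto $\K$, which is a polynomial‑size convex program, so the algorithm is efficient (a faster implementation appears in Section~\ref{sec:faster-alg}). Let $\bG_t:=\ell_t'(\langle\bX_t,\bA^{(i_t,j_t)}\rangle)\,\bA^{(i_t,j_t)}$, a subgradient of the $t$‑th loss at $\bX_t$. By convexity, the regret against the lift $\bX^*$ of any comparator $\bW^*\in\mW$ is at most $\sum_t\langle\bG_t,\bX_t-\bX^*\rangle$, and the standard local‑norm analysis of entropy‑regularized mirror descent (matrix multiplicative weights) gives, as long as $\eta\|\bG_t\|_{\mathrm{op}}$ stays below an absolute constant,
\[
\sum_{t=1}^T\langle\bG_t,\bX_t-\bX^*\rangle \;\le\; \frac{B_R(\bX^*\,\|\,\bX_1)}{\eta}\;+\;\eta\sum_{t=1}^T\Tr\!\bigl(\bX_t\,\bG_t^2\bigr).
\]
The first term is bounded by the entropy diameter of $\K$, which in dimension $2p$ with trace at most $\tau$ is $O(\tau\log p)$; the second term is where the decomposition pays off: using $|\ell_t'(\cdot)|\le G$, the explicit diagonal form of $(\bA^{(i_t,j_t)})^2$, and the constraint $X_t(k,k)\le\beta$,
\[
\Tr\!\bigl(\bX_t\bG_t^2\bigr)=\ell_t'(\cdot)^2\,\Tr\!\bigl(\bX_t\,(\bA^{(i_t,j_t)})^2\bigr)\le G^2\cdot\tfrac14\cdot 4\beta=G^2\beta.
\]
Thus $\sum_t\langle\bG_t,\bX_t-\bX^*\rangle\le \tau\log(2p)/\eta+\eta G^2\beta T$, and optimizing $\eta=\sqrt{\tau\log(2p)/(G^2\beta T)}$ (for which the step‑size condition holds once $T\gtrsim\tau\log(2p)/\beta$, the remaining range of small $T$ being covered by the trivial $O(GT)$ bound) yields $\text{Regret}\le 2G\sqrt{\tau\beta\log(2p)\,T}$.

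\emph{Main obstacle.} The outline above is clean; the real work is the local‑norm regret inequality for matrix multiplicative weights over the constrained body $\K$. It requires the matrix‑exponential machinery (Golden--Thompson / the second‑order expansion of $\Tr e^{A+B}$) to replace a crude $\eta^2\|\bG_t\|_{\mathrm{op}}^2$ term by the local quantity $\eta^2\Tr(\bX_t\bG_t^2)$, and it requires the generalized Pythagorean (Bregman) property of the projection onto $\K$, so that projecting onto the trace‑and‑diagonal constraints only decreases the divergence to $\bX^*$. Pinning down the constants so that the bound comes out as exactly $2G\sqrt{\tau\beta\log(2p)\,T}$, and handling the step‑size/small‑$T$ regime, is the remaining bookkeeping; the efficiency claim reduces to the fact that $\K$ is semidefinite‑representable of polynomial size.
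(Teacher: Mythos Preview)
Your proposal is correct and follows essentially the same route as the paper: the same block‑diagonal lift $\bX_\bW=\mathrm{diag}(\bP,\bN)$ into $\R^{2p\times 2p}$, the same convex body $\K$ (PSD, $\Tr\le\tau$, diagonal $\le\beta$, predictions in $[-1,1]$), the same entropy‑regularized mirror descent (Matrix Multiplicative Weights with quantum‑relative‑entropy projections, proved via Golden--Thompson and the Bregman Pythagorean inequality), and the same local‑norm computation $\Tr(\bX_t\bG_t^2)\le G^2\beta$ using that $(\bA^{(i,j)})^2$ is diagonal with four nonzeros equal to $\tfrac14$. The only cosmetic difference is that you put a factor of $\tfrac12$ into $\bA^{(i,j)}$ so the inner product directly equals $W(i,j)$, whereas the paper's $\bL_t$ is unnormalized and the resulting factor of $2$ is divided out at the end; one small slip in your aside is that in the symmetric case you do \emph{not} ``drop the $\bN$ block''---the lift is still $\mathrm{diag}(\bP,\bN)\in\R^{2p\times 2p}$, only the indexing inside $\bA^{(i,j)}$ changes (the paper's $q$ becomes $0$).
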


The Online Matrix Prediction problem captures several specific
problems considered in the literature, given in the next few
subsections.

\subsection{Online Max-Cut} 

Recall that on each round of online max-cut, the learner should decide
whether two vertices of a graph, $(i_t,j_t)$ are joined by an edge or
not. The learner outputs a number $\hat{y}_t \in [-1, 1]$ which is to be interpreted as a randomized prediction in $\{-1, 1\}$: predict $1$ with probability $\frac{1 + \hat{y}_t}{2}$ and $-1$ with the remaining probability. The adversary then supplies
the true outcome, $y_t \in \{-1, 1\}$, where $y_t = 1$ 
indicates the outcome ``$(i_t, j_t)$ are joined by an edge'', and $y_t = -1$ the
opposite outcome. The loss suffered by the learner is the absolute
loss,
\[ \ell_t(\hat{y}_t)\ = \frac{1}{2} {|\hat{y_t} - y_t|},\] 
which can be also interpreted as the probability that a randomized prediction according to
$\hat{y}_t$ will not equal the true outcome $y_t$.

The comparison class is $\mW = \{\bW_A | A \subseteq [n]\}$,
where
$$ W_A(i,j) =  \begin{cases} 
1\ &\ \text{if } ((i \in A) \text{ and } (j \notin  A))   \text{ or }  ((j \in A) \text{ and } (i \notin  A))  \\
-1\ &\ \text{otherwise.}
\end{cases}
$$ 
That is, $W_A(i,j)$ indicates if $(i,j)$ crosses the cut defined by
$A$ or not. The following lemma (proved in Appendix~\ref{sec:max-cut-proof}) formalizes the relationship of this online problem
to the max-cut problem:
\begin{Lem} \label{lem:maxcutoffline}
  Consider an online sequence of loss functions $\{\ell_t\}$ as
  above. Let
$$ \bW^* = \arg \min_{\bW \in \mW}  \sum_t \ell_t(W(i_t,j_t)) ~. $$
Then $\bW^* = \bW_A$ for the set $A$ that determines the max cut in
the weighted graph over $[n]$ nodes whose weights are given by $w_{ij}
= \sum_{t : (i_t,j_t) = (i,j)} y_t$ for every $(i,j)$.
\end{Lem}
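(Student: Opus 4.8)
The plan is to show that, restricted to cut matrices, the cumulative absolute loss is an affine function of the weight of the corresponding cut in the graph with edge weights $w_{ij}$, so that minimizing the loss coincides with maximizing the cut weight. First I would use that both $W_A(i_t,j_t)$ and $y_t$ take values in $\{-1,1\}$: for such values $\frac12|a-b| = \frac{1-ab}{2}$, so the per-round loss of a cut matrix is $\ell_t(W_A(i_t,j_t)) = \frac12|W_A(i_t,j_t) - y_t| = \frac{1 - W_A(i_t,j_t)\,y_t}{2}$, i.e.\ it equals the indicator that $\bW_A$ mispredicts the edge status of $(i_t,j_t)$.

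Next I would sum over $t$ and regroup the summands according to the presented pair $(i,j)$, using that $W_A$ depends only on $(i,j)$ and not on $t$. This gives
\[
\sum_{t=1}^T \ell_t(W_A(i_t,j_t)) \;=\; \frac{T}{2} - \frac12 \sum_{(i,j)} W_A(i,j)\, w_{ij},
\]
with $w_{ij} = \sum_{t:(i_t,j_t)=(i,j)} y_t$ exactly as in the statement. Since $T/2$ does not depend on $A$, the minimizer $\bW^*$ over $\mW$ is precisely the cut matrix $\bW_A$ that maximizes the linear functional $\sum_{(i,j)} W_A(i,j)\, w_{ij}$.

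Finally I would split this functional according to whether $(i,j)$ crosses the cut $A$: letting $C_A$ be the set of crossing pairs, $W_A(i,j) = 1$ on $C_A$ and $-1$ off it, so $\sum_{(i,j)} W_A(i,j)\, w_{ij} = 2\sum_{(i,j)\in C_A} w_{ij} - \sum_{(i,j)} w_{ij}$; the last sum is again $A$-independent, so the argmax is the same as the argmax of $\sum_{(i,j)\in C_A} w_{ij}$, which is by definition the weight of the cut $A$ in the weighted graph with edge weights $w_{ij}$. Hence $\bW^* = \bW_A$ for $A$ the max-cut set. There is no genuinely hard step here; the only point requiring a little care is the bookkeeping when a pair $(i,j)$, its transpose $(j,i)$, or a diagonal pair $(i,i)$ is presented several times --- but this is harmless because $W_A$ is symmetric, diagonal entries never cross a cut, and all the $A$-independent quantities ($T/2$, $\sum_{(i,j)} w_{ij}$, and the diagonal contribution) drop out of the argmax.
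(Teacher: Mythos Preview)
Your proof is correct and follows essentially the same approach as the paper: both reduce the cumulative absolute loss of a cut matrix to an $A$-independent constant minus $\tfrac{1}{2}\sum_{(i,j)} W_A(i,j)\,w_{ij}$, and then observe that maximizing this linear form is the same as maximizing the cut weight. The only cosmetic difference is that the paper introduces counts $c_{ij}^{+}, c_{ij}^{-}$ of rounds with $y_t = \pm 1$ and expands $\ell_t$ case-by-case, whereas you use the identity $\tfrac{1}{2}|a-b| = \tfrac{1-ab}{2}$ for $a,b\in\{-1,1\}$ directly; the resulting computations are identical.
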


A regret bound of $O(\sqrt{n T})$ is attainable for this problem as
follows via an exponential time algorithm: consider the set of all $2^n$ cuts in the graph. For each cut
defined by $A$, consider a decision rule or ``expert'' that predicts
according to the matrix $\bW_A $. Standard bounds for the experts
algorithm imply the $O(\sqrt{n T})$ regret bound.


A simple way to get an efficient algorithm is to replace $\mW$ with the
class of all matrices in $\{-1, 1\}^{n \times n}$. This leads to $n^2$
different prediction tasks, each of which corresponds to the decision
if there is an edge between two nodes, which is efficiently
solvable. However, the regret with respect to this larger comparison
class scales like $O(\sqrt{n^2 T})$.

Another popular approach for circumventing the hardness is to replace
$\mW$ with the set of matrices whose trace-norm is bounded by $\tau =
n$. However, applying the online mirror descent algorithmic framework with an
appropriate squared-Schatten norm regularization, as described in
\citep{kakade2009duality}, leads to a regret bound that again scales
like $O(\sqrt{n^2 T})$.


In contrast, our Online Matrix Prediction algorithm yields an efficient
solution for this problem, with a regret that scales like $\sqrt{n
  \log(n) T}$. The regret bound of the algorithm follows from
the following:
\begin{Lem} \label{lem:max-cut-decomposable}
$\mW$  is $(1,n)$-decomposable. 
\end{Lem}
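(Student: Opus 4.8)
The plan is to exhibit an explicit $(1,n)$-decomposition of each cut matrix $\bW_A$ directly from the definition. Since each $\bW_A$ is a symmetric $n\times n$ matrix, $\sym(\bW_A) = \bW_A$ and the relevant order is $p = n$; so it suffices to write $\bW_A = \bP - \bN$ with $\bP,\bN \succeq \bzero$, $\Tr(\bP)+\Tr(\bN)\le n$, and every diagonal entry of $\bP$ and of $\bN$ at most $1$, and to observe that the construction is uniform over $A$.

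First I would rewrite $\bW_A$ in terms of the signed indicator vector of $A$. Let $\bx \in \{-1,+1\}^n$ be given by $x_i = +1$ if $i \in A$ and $x_i = -1$ otherwise. Then $x_i x_j = +1$ exactly when $i$ and $j$ lie on the same side of the cut and $x_i x_j = -1$ exactly when they lie on opposite sides; comparing this with the case definition of $W_A(i,j)$, and separately checking the diagonal $i=j$ (where $W_A(i,i) = -1 = -x_i^2$), one obtains the identity $\bW_A = -\bx\bx^\top$.

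Given this identity the decomposition is immediate: take $\bN = \bx\bx^\top$ and $\bP = \bzero$. Both matrices are symmetric and positive semidefinite, $\bP - \bN = -\bx\bx^\top = \bW_A$, the trace sum is $\Tr(\bP)+\Tr(\bN) = 0 + \bx^\top\bx = n$, and each diagonal entry is either $P(i,i) = 0 \le 1$ or $N(i,i) = x_i^2 = 1 \le 1$. Since $A \subseteq [n]$ was arbitrary, every matrix in $\mW$ admits such a decomposition, so $\mW$ is $(1,n)$-decomposable.

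The only piece of actual content is the observation $\bW_A = -\bx\bx^\top$; once that is in hand, the rest is a one-line verification, so I do not expect any real obstacle. (One could instead split $-\bx\bx^\top$ more symmetrically between $\bP$ and $\bN$, but placing all the mass on $\bN$ is the cleanest choice and already meets all three required bounds exactly.)
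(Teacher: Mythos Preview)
Your proof is correct and essentially identical to the paper's: both observe that $\bW_A = -\bx\bx^\top$ for the $\pm 1$ indicator vector $\bx$ of $A$, take $\bP = \bzero$ and $\bN = \bx\bx^\top$, and verify the three conditions of a $(1,n)$-decomposition.
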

Combining the above with \thmref{thm:Main} yields:
\begin{Cor} \label{thm:max-cut-regret}
	There is an efficient algorithm for the online max-cut problem with regret bounded by $2\sqrt{n\log(n)T}$. 
\end{Cor}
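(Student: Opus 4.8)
The plan is to recognize online max-cut as an instance of Online Matrix Prediction and then apply \thmref{thm:Main} with the decomposability parameters supplied by \lemref{lem:max-cut-decomposable}. For the reduction: the comparison class $\mW = \{\bW_A : A \subseteq [n]\}$ already lies in $\{-1,1\}^{n\times n} \subseteq [-1,1]^{n\times n}$, as Online Matrix Prediction requires; on round $t$ the adversary reveals $(i_t,j_t)$, the learner plays some $\bW_t \in \conv(\mW)$ and predicts $\hat{y}_t = W_t(i_t,j_t) \in [-1,1]$ (exactly the biased-coin prediction described for online max-cut), after which the adversary reveals $y_t \in \{-1,1\}$ and the learner pays the absolute loss $\ell_t(\hat{y}_t) = \tfrac12|\hat{y}_t - y_t|$. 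This $\ell_t$ is convex and $\tfrac12$-Lipschitz on $[-1,1]$, so online max-cut is an instance of Online Matrix Prediction with Lipschitz parameter $G = \tfrac12$.

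Next I would read off the remaining parameters. By \lemref{lem:max-cut-decomposable}, $\mW$ is $(1,n)$-decomposable, i.e.\ $\beta = 1$ and $\tau = n$; and since each $\bW_A$ is a symmetric $n\times n$ matrix we have $\sym(\bW_A) = \bW_A$, so its order is $p = n$. (The decomposition underlying the lemma is transparent: if $\bx_A \in \{-1,1\}^n$ denotes the signed indicator of $A$, then an entrywise check, diagonal included, gives $\bW_A = -\bx_A\bx_A^{\top}$, so taking $\bP = \bzero$ and $\bN = \bx_A\bx_A^{\top}$ works, with $\Tr(\bP) + \Tr(\bN) = n$ and every diagonal entry of $\bP,\bN$ at most $1$.)

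It remains to substitute into the bound of \thmref{thm:Main}. With $G = \tfrac12$, $\tau = n$, $\beta = 1$, $p = n$,
\[
  \text{Regret} \;\le\; 2G\sqrt{\tau\beta\log(2p)\,T} \;=\; \sqrt{n\log(2n)\,T} \;\le\; 2\sqrt{n\log(n)\,T},
\]
where the last step uses $\log(2n) \le 2\log n$ for $n \ge 2$. Efficiency comes for free from the efficiency guarantee of \thmref{thm:Main}. (Moreover, by \lemref{lem:maxcutoffline} the comparator $\min_{\bW\in\mW}\sum_t\ell_t(W(i_t,j_t))$ against which this regret is measured is exactly the maximum-cut value of the aggregated weighted graph, so the bound is genuinely a regret guarantee against the best cut in hindsight.)

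There is essentially no obstacle here: all the substance sits in \thmref{thm:Main} and \lemref{lem:max-cut-decomposable}, both of which are available. The only two points one must get right are using the correct Lipschitz constant $G = \tfrac12$ for the absolute loss (with $G = 1$ the stated constant $2$ would not quite come out) and the innocuous estimate $\log(2n) \le 2\log n$.
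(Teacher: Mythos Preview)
Your proposal is correct and matches the paper's approach exactly: the corollary is simply obtained by plugging the $(1,n)$-decomposability of \lemref{lem:max-cut-decomposable} (together with $G=\tfrac12$ and $p=n$) into the bound of \thmref{thm:Main}, followed by the crude estimate $\log(2n)\le 2\log n$. Your parenthetical rederivation of the decomposition $\bW_A=-\bx_A\bx_A^\top$ is also precisely how the paper proves \lemref{lem:max-cut-decomposable}.
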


We prove (in Appendix~\ref{sec:lower-bounds}) that the upper bound is near-optimal:
\begin{Thm} \label{thm:max-cut-LB}
	For any algorithm for the online max-cut problem, there is a sequence of entries $(i_t, j_t)$ and loss functions $\ell_t$ for $t = 1, 2, \ldots, T$ such that the regret of the algorithm is at least $\sqrt{{nT}/{16}}$.
\end{Thm}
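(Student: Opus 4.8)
The plan is to use a randomized adversary together with an averaging (Yao-style) argument: I exhibit a distribution over instances against which \emph{every} learner --- even adaptive and randomized --- has expected regret at least $\sqrt{nT/16}$, so some fixed instance witnesses the bound. Assume for convenience that $n$ is even and $n/2$ divides $T$, and set $L := 2T/n$; the general case follows by rounding and loses only lower-order terms. Split the $T$ rounds into $n/2$ consecutive blocks of length $L$. In block $k$ the adversary always presents the pair $(i_t,j_t)=(2k-1,2k)$, and on each round it samples $y_t\in\{-1,1\}$ uniformly and independently and sets $\ell_t(\hat y)=\tfrac12|\hat y-y_t|$.

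First I would show the learner's cumulative loss is exactly $T/2$ in expectation. For any $\hat y\in[-1,1]$ and $y$ a uniform sign, $\E[\tfrac12|\hat y-y|]=\tfrac14(1-\hat y)+\tfrac14(1+\hat y)=\tfrac12$. Since $y_t$ is drawn only after $\bW_t$ has been committed and is independent of the history, the conditional expected loss of round $t$ is $\tfrac12$, so $\E[\sum_{t}\ell_t(W_t(i_t,j_t))]=T/2$ for every learner.

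Next I would compute the loss of the best cut. Because all entries in a block coincide, a cut $\bW_A$ assigns a single value $v_k:=W_A(2k-1,2k)\in\{-1,1\}$ to every round of block $k$, and since different blocks use disjoint vertex pairs, $A$ can realize an arbitrary sign pattern $(v_1,\dots,v_{n/2})$. Letting $S_k:=\sum_{t\in\text{block }k}y_t$ and $n_k^{\pm}$ the number of $\pm1$ outcomes in block $k$, the block-$k$ loss of $\bW_A$ equals its number of disagreements, whose minimum over $v_k$ is $\min(n_k^+,n_k^-)=\tfrac L2-\tfrac12|S_k|$; optimizing blocks independently,
\[
\min_{\bW\in\mW}\sum_{t=1}^T\ell_t(W(i_t,j_t))\;=\;\frac T2-\frac12\sum_{k=1}^{n/2}|S_k|.
\]
Each $S_k$ is a sum of $L$ independent Rademacher variables, so the sharp $L_1$ Khintchine inequality gives $\E|S_k|\ge\sqrt{L/2}$ (a cruder fourth-moment estimate also yields $\E|S_k|=\Omega(\sqrt L)$, which is enough up to the constant). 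Combining,
\[
\E[\text{Regret}]\;=\;\frac T2-\E\Big[\min_{\bW\in\mW}\sum_t\ell_t\Big]\;=\;\frac12\sum_{k=1}^{n/2}\E|S_k|\;\ge\;\frac n4\sqrt{\tfrac L2}\;=\;\frac n4\sqrt{\tfrac Tn}\;=\;\sqrt{\tfrac{nT}{16}},
\]
and since this holds for the expectation over the adversary's randomness, some fixed realization of $(y_t)_t$ achieves regret at least $\sqrt{nT/16}$.

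The two places needing care --- and hence the main obstacles --- are: (i) justifying that the learner's expected cumulative loss is pinned at exactly $T/2$ regardless of its (adaptive, randomized) strategy, which relies on $y_t$ being fresh randomness independent of $\bW_t$ and the past; and (ii) getting the right constant in the anti-concentration step, since the bound is tight only because the optimal Khintchine constant $1/\sqrt2$ is attained for $\pm1$ coefficient vectors, yielding $\sqrt{nT/16}$ rather than a smaller constant times $\sqrt{nT}$. Divisibility and parity issues are routine and handled by rounding.
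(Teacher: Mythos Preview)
Your proposal is correct and follows essentially the same argument as the paper: partition $T$ into $n/2$ blocks on disjoint vertex pairs, use i.i.d.\ Rademacher labels so that any learner has expected loss $T/2$, and lower-bound the best cut's advantage via Khintchine's inequality to obtain expected regret at least $\sqrt{nT/16}$. The only cosmetic difference is your choice of pairs $(2k-1,2k)$ versus the paper's $(i,i+n/2)$, which is immaterial since both are families of $n/2$ disjoint pairs on which cut values can be set independently.
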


\subsection{Collaborative Filtering with Bounded Trace Norm}

In this problem, the comparison set $\mW$ is the following set of $m
\times n$ matrices with trace norm bounded by some parameter $\tau$:
\begin{equation} \label{eqn:mWcf} \mW\ :=\ \{\bW \in [-1, 1]^{m \times n}:\ \|\bW\|_\star \leq
\tau\}.
\end{equation}
Without loss of generality we assume that $m \le n$.   

As before, applying the technique of \citet{kakade2009duality} leads to
a regret bound that scales as $\sqrt{\tau^2 T}$, which leads to
trivial results in the most relevant case where $\tau =
\Theta(\sqrt{mn})$. In contrast, we can obtain a much better result
based on the following lemma.
\begin{Lem} \label{lem:cf-decomposable}
The class $\mW$ given in (\ref{eqn:mWcf}) is
$(\sqrt{m+n},2\tau)$-decomposable. 
\end{Lem}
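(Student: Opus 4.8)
The plan is to take, for an arbitrary $\bW \in \mW$, the canonical decomposition of $\bM := \sym(\bW)$ into its positive and negative spectral parts, and then show that the hypothesis $\bW \in [-1,1]^{m \times n}$ is exactly what keeps the diagonal entries of those parts small. Concretely, I would fix $\bW \in \mW$, let $p$ be the order of $\bM = \sym(\bW)$ (so $p = m+n$, or $p = n$ in the degenerate symmetric case), write the eigendecomposition $\bM = \sum_{k=1}^{p} \mu_k\, q_k q_k^\top$ with $q_1, \ldots, q_p$ a full orthonormal eigenbasis, and set $\bP := \sum_{k : \mu_k > 0} \mu_k\, q_k q_k^\top$ and $\bN := \sum_{k : \mu_k < 0} (-\mu_k)\, q_k q_k^\top$. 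Both are symmetric and positive semidefinite, and $\bP - \bN = \bM = \sym(\bW)$, so the first requirement of $(\beta,\tau)$-decomposability is immediate.

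For the trace requirement, $\Tr(\bP) + \Tr(\bN) = \sum_k |\mu_k| = \|\bM\|_\star$; in the non-symmetric case (or when $m \ne n$) the eigenvalues of $\sym(\bW)$ are $\pm \sigma_i$ over the singular values $\sigma_i$ of $\bW$ together with some zeros, so $\|\bM\|_\star = 2\|\bW\|_\star$, while in the symmetric case $\bM = \bW$ and $\|\bM\|_\star = \|\bW\|_\star$; either way $\Tr(\bP) + \Tr(\bN) \le 2\|\bW\|_\star \le 2\tau$. The substantive step is the diagonal bound. Since $\bP + \bN = \sum_k |\mu_k|\, q_k q_k^\top$ with $\bP, \bN \succeq \bzero$, for every $i$ we have $P(i,i), N(i,i) \le \sum_k |\mu_k|\, q_k(i)^2$. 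Using $\sum_k q_k(i)^2 = 1$ and Jensen's inequality for the concave map $x \mapsto \sqrt{x}$ against the distribution $k \mapsto q_k(i)^2$, this is at most $\sqrt{\sum_k \mu_k^2\, q_k(i)^2} = \sqrt{(\bM^2)_{ii}} = \big(\sum_j M(i,j)^2\big)^{1/2}$. Finally, every entry of $\bM = \sym(\bW)$ is either $0$ or equals $\pm W(k,l)$ for some $(k,l)$, hence lies in $[-1,1]$, and each row of $\bM$ has at most $\max(m,n) = n$ nonzero entries; therefore $(\bM^2)_{ii} \le n$ and $P(i,i), N(i,i) \le \sqrt{n} \le \sqrt{m+n}$ for all $i$. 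This shows $\bW$ is $(\sqrt{m+n}, 2\tau)$-decomposable, and since $\bW \in \mW$ was arbitrary, so is $\mW$.

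The main obstacle is recognizing that this naive spectral decomposition already does the job. A priori one worries that $\bW$ could concentrate all of its trace norm on a single row or column, which would blow up the corresponding diagonal entry of $\bP$ or $\bN$, and this genuinely does happen without the boundedness hypothesis. The resolution is to never estimate the diagonal of $(\bM^2)^{1/2}$ directly, but instead to pass through $(\bM^2)_{ii} = \sum_j M(i,j)^2$, where the $[-1,1]$ bound on the entries of $\bW$ finally enters; the Jensen step, which says $\big((\bM^2)^{1/2}\big)_{ii} \le \sqrt{(\bM^2)_{ii}}$, is precisely the bridge that makes this transfer legitimate.
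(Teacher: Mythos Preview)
Your proposal is correct and is essentially the paper's own argument: both take the spectral splitting $\bP-\bN$ of $\sym(\bW)$, get $\Tr(\bP)+\Tr(\bN)=\|\sym(\bW)\|_\star=2\|\bW\|_\star\le 2\tau$, and bound the diagonal of $\bP+\bN=|\sym(\bW)|$ by $\sqrt{(\bM^2)_{ii}}$, which is at most $\sqrt{p}$ because the entries of $\bM$ lie in $[-1,1]$. The only cosmetic difference is that the paper reaches $(|\bM|)_{ii}\le\sqrt{(\bM^2)_{ii}}$ via the identity $|\bM|^2=\bM^2$ (so $(\bM^2)_{ii}$ is the squared row norm of $|\bM|$), whereas you phrase the same inequality as Jensen for $\sqrt{\cdot}$ against the weights $q_k(i)^2$; your observation that each row of $\sym(\bW)$ has at most $\max(m,n)=n$ nonzero entries even gives the slightly sharper $\sqrt{n}$ in place of $\sqrt{m+n}$.
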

Combining the above with \thmref{thm:Main} yields:
\begin{Cor} \label{cor:collaborative-filtering-regret} There is an efficient
  algorithm for the online collaborative filtering problem with regret
  bounded by $2G\sqrt{2\tau \sqrt{n+m}\log(2(m+n))T})$, assuming that
  for all $t$ the loss function is $G$-Lipschitz. 
\end{Cor}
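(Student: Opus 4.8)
The plan is to obtain the corollary by a direct instantiation of \thmref{thm:Main} with the decomposition parameters furnished by \lemref{lem:cf-decomposable}; essentially all the work has already been done in those two results, so this is a bookkeeping argument.

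First I would check that the setup of the Online Matrix Prediction problem applies with input class $\mW$ as defined in~(\ref{eqn:mWcf}): by construction $\mW \subseteq [-1,1]^{m\times n}$, and by hypothesis each loss function $\ell_t$ is convex and $G$-Lipschitz, so the problem is of exactly the form to which \thmref{thm:Main} applies. Next, \lemref{lem:cf-decomposable} tells us that $\mW$ is $(\beta,\tau')$-decomposable with $\beta = \sqrt{m+n}$ and $\tau' = 2\tau$. Since $m,n \ge 1$ we have $\beta = \sqrt{m+n} \ge \sqrt{2} > 1$, so the standing requirement $\beta \ge 1$ of \thmref{thm:Main} is satisfied. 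It also remains to identify the order $p$ of $\sym(\bW)$: for an $m\times n$ matrix $\bW$, the Symmetrization definition produces the $(m+n)\times(m+n)$ block matrix, so $p = m+n$ (in the degenerate symmetric square case $p = n = m+n$ trivially fails, but here $\bW$ is genuinely $m\times n$, and even if one prefers to treat a square symmetric case the resulting $p$ is only smaller, which would only improve the bound).

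Finally I would substitute $\tau' = 2\tau$, $\beta = \sqrt{m+n}$, and $p = m+n$ into the regret guarantee of \thmref{thm:Main}, namely $\text{Regret} \le 2G\sqrt{\tau'\beta\log(2p)T}$, to obtain
\[
\text{Regret} \ \le\ 2G\sqrt{2\tau\,\sqrt{m+n}\,\log(2(m+n))\,T},
\]
which is the claimed bound; efficiency of the algorithm is inherited directly from \thmref{thm:Main}. There is no genuine obstacle in this proof — the only point requiring care is to avoid conflating the two uses of the symbol ``$\tau$'': the trace-norm bound $\tau$ that defines $\mW$ versus the decomposition-trace parameter, which by \lemref{lem:cf-decomposable} equals $2\tau$ and is what actually enters the regret bound of \thmref{thm:Main}.
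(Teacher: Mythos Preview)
Your proposal is correct and matches the paper's own argument, which simply says ``Combining the above with \thmref{thm:Main} yields'' the corollary. You have just spelled out the substitution $\beta=\sqrt{m+n}$, $\tau'=2\tau$, $p=m+n$ and checked the side condition $\beta\ge 1$, which is exactly what is needed.
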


This upper bound is near-optimal, as we can also show (in Appendix~\ref{sec:lower-bounds}) the following lower bound on the regret:
\begin{Thm} \label{thm:collaborative-filtering-LB} For any algorithm for
  online collaborative filtering problem with trace norm bounded by $\tau$,
  there is a sequence of entries $(i_t, j_t)$ and $G$-Lipschitz loss functions
  $\ell_t$ for $t = 1, 2, \ldots, T$ such that the regret of the
  algorithm is at least  $G\sqrt{\frac{1}{2}\tau\sqrt{n}T}$.
\end{Thm}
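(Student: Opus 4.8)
The plan is a Yao-style argument against a randomized adversary, with a comparator family chosen so that its trace norm is exactly controlled while it still carries $\tau\sqrt{n}$ ``free bits''. We use the convex, $G$-Lipschitz loss $\ell_t(\hat{y}) = G|\hat{y} - y_t|$ on $[-1,1]$, and for cleanliness assume that $p := \tau/\sqrt{n}$ is an integer with $1 \le p \le m$ and that $T \ge pn$ (the general regime is handled by rounding and is routine; recall $m \le n$). Restrict attention to the $p \times n$ block consisting of the first $p$ rows and all $n$ columns. The key structural observation is that \emph{every} matrix $\bW$ that is $\pm 1$ on this block and $0$ elsewhere lies in $\mW$: since $\rank(\bW) \le p$,
\[
\|\bW\|_\star \ \le\ \sqrt{\rank(\bW)} \cdot \|\bW\|_F \ \le\ \sqrt{p} \cdot \sqrt{pn} \ =\ p\sqrt{n} \ =\ \tau ,
\]
and its entries lie in $[-1,1]$. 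This supplies $2^{pn}$ admissible comparators, indexed by the sign patterns on the block.

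The input distribution is as follows: query each of the $pn$ block entries exactly $c := T/(pn)$ times in an arbitrary fixed order; on round $t$, after the learner outputs $\hat{y}_t \in [-1,1]$, draw $y_t \in \{-1,+1\}$ uniformly and reveal $\ell_t(\hat{y}) = G|\hat{y} - y_t|$. Since $\tfrac12 |\hat{y} - 1| + \tfrac12 |\hat{y} + 1| = 1$ for every $\hat{y} \in [-1,1]$, the learner's conditional expected loss on each round is exactly $G$ no matter what it predicts, so its expected cumulative loss is $GT$. For the comparator, take the matrix $\bW^{\mathrm{maj}}$ equal on each block entry to the majority of that entry's $c$ observed outcomes (and $0$ off the block); it is admissible by the observation above. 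If entry $(i,j)$ saw $c^{ij}_+$ outcomes $+1$ and $c^{ij}_-$ outcomes $-1$, then $\bW^{\mathrm{maj}}$ pays $2G \min(c^{ij}_+, c^{ij}_-) = G(c - |c^{ij}_+ - c^{ij}_-|)$ on the corresponding rounds, hence
\[
\min_{\bW \in \mW}\ \sum_{t=1}^T \ell_t(W(i_t, j_t)) \ \le\ GT \ -\ G \sum_{(i,j)} \big| c^{ij}_+ - c^{ij}_- \big| .
\]

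Subtracting from the learner's expected loss $GT$ and taking expectations, the expected regret is at least $G \sum_{(i,j)} \E| c^{ij}_+ - c^{ij}_- | = G\, pn\, \E\big| \sum_{k=1}^{c} \epsilon_k \big|$, where the $\epsilon_k$ are i.i.d.\ Rademacher. Using the Khintchine-type anti-concentration bound $\E| \sum_{k=1}^{c} \epsilon_k | \ge \sqrt{c/2}$ (valid for every $c \ge 1$, with equality at $c = 2$) together with $pn = \tau\sqrt{n}$, the expected regret is at least $G\, pn \sqrt{T/(2pn)} = G \sqrt{\tfrac12\, \tau\sqrt{n}\, T}$. Finally, by Yao's principle (averaging also over the algorithm's internal coins if it is randomized), there is a fixed sequence of indices $(i_t, j_t)$ and losses $\ell_t$ on which the algorithm's expected regret is at least this quantity.

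The main obstacle is getting the construction right: noticing that a \emph{full} $p \times n$ block of $\pm 1$'s with $p = \tau/\sqrt{n}$ is simultaneously inside the trace-norm ball (via $\|\cdot\|_\star \le \sqrt{\rank}\cdot\|\cdot\|_F$, uniformly over all sign patterns) and as ``large'' as possible, carrying $pn = \tau\sqrt{n}$ independent bits, so that the experts-style $\sqrt{(\#\,\text{bits}) \cdot T}$ floor becomes exactly $\sqrt{\tau\sqrt{n}\, T}$; the more obvious choice of a uniformly scaled full $m \times n$ sign matrix only yields the weaker $\sqrt{\tau^2 T / m}$. The remaining ingredients --- the learner's per-round expected loss being $G$, the majority comparator's cost, the precise Khintchine constant, and handling non-integrality of $\tau/\sqrt{n}$ and $T/(pn)$ and small $T$ --- are routine.
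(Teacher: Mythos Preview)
Your proof is correct and follows essentially the same approach as the paper: restrict to a $p \times n$ block with $p = \tau/\sqrt{n}$, observe via $\|\bW\|_\star \le \sqrt{\rank(\bW)}\,\|\bW\|_F$ that every $\pm 1$ sign pattern on this block lies in $\mW$, use i.i.d.\ Rademacher labels on repeated queries of each entry, pick the sign-of-majority comparator, and apply Khintchine. The only cosmetic difference is that the paper uses the linear loss $\ell_t(\hat y) = \sigma_t G \hat y$ rather than your absolute loss $G|\hat y - y_t|$; since $|\hat y - y_t| = 1 - y_t\hat y$ for $\hat y \in [-1,1]$ and $y_t \in \{\pm 1\}$, the two differ by a constant per round and yield identical regret calculations.
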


In fact, the technique used to prove the above lower bound also
implies a lower bound on the sample complexity of collaborative filtering in
the batch setting (proved in Appendix~\ref{sec:lower-bounds}).
\begin{Thm} \label{thm:collaborative-filtering-batch-LB} The sample
  complexity of learning $\mW$ in the batch setting, is $\Omega(\tau
  \sqrt{n}/\epsilon^2)$. In particular, when $\tau = \Theta(n)$, the
  sample complexity is $\Omega({n^{1.5}}/\epsilon^2)$.
\end{Thm}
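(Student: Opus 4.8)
\emph{Proof idea.} The plan is to reuse the hard instance underlying the online lower bound of Theorem~\ref{thm:collaborative-filtering-LB}, recast it as a \emph{fixed} family of distributions over $([m]\times[n])\times\{-1,+1\}$, and then run a standard Assouad-type information argument. Work in the regime $\tau = \Omega(\sqrt n)$ (in particular $\tau = \Theta(n)$), and for concreteness take the absolute loss $\ell(\hat y, y) = \tfrac12|\hat y - y|$, which is $1$-Lipschitz; the dependence on a general Lipschitz constant $G$ is recovered by scaling the loss. First I would exhibit a large shattered subset inside $\mW$: set $a := \lfloor \tau/\sqrt n\rfloor$ (so $a\ge 1$ and, since $\|\bW\|_\star\le\sqrt{\rank(\bW)}\,\|\bW\|_F\le m\sqrt n$ for every $\bW\in[-1,1]^{m\times n}$, also $a\le m$), let $D := [a]\times[n]$ so that $|D| = d = an = \Theta(\tau\sqrt n)$, and for each sign pattern $\sigma\in\{-1,+1\}^D$ let $\bW_\sigma$ be the matrix that equals $\sigma_{ij}$ on $D$ and $0$ elsewhere. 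Then $\bW_\sigma\in[-1,1]^{m\times n}$, and since it has at most $a$ nonzero rows and $\|\bW_\sigma\|_F = \sqrt{an}$ we get $\|\bW_\sigma\|_\star\le\sqrt a\cdot\sqrt{an} = a\sqrt n\le\tau$; hence $\bW_\sigma\in\mW$ for every $\sigma$.

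Next I would define, for each $\sigma$, the distribution $\mathcal D_\sigma$: draw $(i,j)$ uniformly from $D$ and then $y\in\{-1,+1\}$ with $\pr[y = \sigma_{ij}] = \tfrac12 + \tfrac\rho2$, where $\rho = \Theta(\epsilon)$ is chosen small enough that $\rho\le 1$. A one-line computation gives $\E_y[\ell(\hat y, y)] = \tfrac12 - \tfrac\rho2\,\sigma_{ij}\hat y$ for every $\hat y\in[-1,1]$, so the pointwise Bayes prediction on entry $(i,j)\in D$ is $\sigma_{ij}$, i.e.\ exactly $\bW_\sigma(i,j)$. Since $D$ is the support of $\mathcal D_\sigma$, this shows $\min_{\bW\in\mW}\E_{\mathcal D_\sigma}[\ell(W(i,j),y)] = \tfrac{1-\rho}2$, attained by $\bW_\sigma$, and consequently that for any hypothesis $\hat\bW$ the excess risk equals $\tfrac\rho2\,\E_{(i,j)\sim\mathrm{Unif}(D)}[\,1 - \sigma_{ij}\hat W(i,j)\,]$. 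In particular the excess risk is at least $\tfrac\rho{2d}$ times the number of entries of $D$ on which $\mathrm{sign}(\hat W(i,j))\ne\sigma_{ij}$ (any tie-breaking convention at $0$ works, since then $\sigma_{ij}\hat W(i,j)\le 0$).

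The core step is then the information-theoretic claim: if $\sigma$ is uniform on $\{-1,+1\}^D$ and the learner observes $S$ i.i.d.\ samples from $\mathcal D_\sigma$, then for $S\le c_0\,d/\rho^2$ the rounded output $\mathrm{sign}(\hat\bW|_D)$ differs from $\sigma$ on at least $d/4$ coordinates in expectation. This is Assouad's lemma applied coordinatewise: the coordinates of the sampled pairs, and the labels of all samples \emph{not} landing on $(i,j)$, have a law independent of $\sigma_{ij}$, so the two laws of the whole sample under $\sigma_{ij} = +1$ and $\sigma_{ij} = -1$ differ only through the $K_{ij}$ samples that hit $(i,j)$, with $\E[K_{ij}] = S/d$; hence their total variation distance is at most $\sqrt{\tfrac12\,\E[K_{ij}]\cdot\mathrm{KL}(\mathrm{Bern}(\tfrac{1+\rho}2)\,\|\,\mathrm{Bern}(\tfrac{1-\rho}2))} = O(\rho\sqrt{S/d})$, which is $\le\tfrac12$ once $S\le c_0 d/\rho^2$. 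Summing the per-coordinate bounds gives the claim. Plugging into the excess-risk identity, the expected excess risk is $\Omega(\rho) = \Omega(\epsilon)$ whenever $S\le c_0 d/\rho^2 = \Omega(\tau\sqrt n/\epsilon^2)$, which proves the theorem; for $\tau = \Theta(n)$ this reads $\Omega(n^{3/2}/\epsilon^2)$ since $\tau\sqrt n = \Theta(n^{3/2})$.

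I expect the main obstacle to be the first step: finding a construction that simultaneously stays in $\mW$ (bounded entries \emph{and} trace norm $\le\tau$) and yet contains $\Theta(\tau\sqrt n)$ freely assignable $\pm1$ coordinates \emph{at a constant scale}, so that the shattered-set size --- the part of the bound that does not degrade with $\epsilon$ --- is $\Theta(\tau\sqrt n)$ and the $\epsilon$-dependence is carried entirely by the label noise $\rho$. The block construction above works precisely because the crude inequality $\|\bW\|_\star\le\sqrt{\rank(\bW)}\,\|\bW\|_F$ is tight up to constants for a sign block consisting of $a\le n$ full rows. The remaining work --- verifying the risk identity and carrying out the binary-KL bookkeeping against the random multinomial counts $K_{ij}$ in Assouad's lemma --- is routine.
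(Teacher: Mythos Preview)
Your proposal is correct and follows essentially the same approach as the paper: both use the block of $a=\Theta(\tau/\sqrt n)$ full rows (yielding $d=\Theta(\tau\sqrt n)$ freely assignable $\pm1$ entries that stay in $\mW$ via $\|\bW\|_\star\le\sqrt{\rank(\bW)}\,\|\bW\|_F$), place a uniform marginal on those entries with label noise of order $\epsilon$, and argue that each entry needs $\Omega(1/\epsilon^2)$ samples to be learned. The paper's proof simply appeals to ``standard no-free-lunch arguments'' for that last step, whereas you spell it out via Assouad's lemma and a per-coordinate KL bound---a more explicit but equivalent route.
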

This matches an upper bound given by \citet{ShamirSh11}. The question
of determining the sample complexity of $\mW$ in the batch setting has
been posed as an open problem by Shamir (who conjectured that 
it scales like ${n^{1.5}}$) and Srebro (who conjectured that it scales
like ${n^{4/3}}$).

\subsection{Online gambling}

In the gambling problem, we define the comparison set $\mW$ as the
following set of $n \times n$ matrices. First, for every permutation
$\pi: [n] \rightarrow [n]$, define the matrix $\bW_\pi$ as:
\[ W_\pi(i, j)\ =\ \begin{cases} 
1\ &\ \text{if } \pi(i) \leq \pi(j)\\
0\ &\ \text{otherwise.}
\end{cases}
\]
Then the set $\mW$ is defined as
\begin{equation} \label{eqn:mWgamble} 
\mW\ :=\ \{\bW_\pi:\ \pi \text{ is a permutation of } [n]\}. 
\end{equation}
On round $t$, the adversary supplies a pair $(i_t, j_t)$ with $i_t
\neq j_t$, and the learner outputs as a prediction $\hat{y}_t =
W_t(i_t,j_t) \in [0, 1]$, where we interpret $\hat{y}_t$ as the
probability that $i_t$ will beat $j_t$. The adversary then supplies
the true outcome, $\hat{y}_t \in \{0, 1\}$, where $\hat{y}_t = 1$
indicates the outcome ``$i_t$ beats $j_t$'', and $\hat{y}_t = 0$ the
opposite outcome. The loss suffered by the learner is the absolute
loss,
\[ \ell_t(y_t)\ =\ |y_t - \hat{y}_t|,\] which can be also interpreted
as the probability that a randomized prediction according to
$\hat{y}_t$ will not equal to the true outcome $y_t$.

As before, we tackle the problem by analyzing the decomposability of
$\mW$.
\begin{Lem} \label{lem:gambling}
The class $\mW$ given in (\ref{eqn:mWgamble}) is
$(O(\log(n)),O(n\log(n)))$-decomposable. 
\end{Lem}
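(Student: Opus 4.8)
The plan is to exhibit, for each permutation $\pi$, a dyadic (binary-tree) decomposition of $\bW_\pi$ and then symmetrize it block by block. Fix a complete binary tree whose $N := 2^{\lceil \log_2 n\rceil} \le 2n$ leaves are identified with positions $1,\dots,N$, and regard team $i$ as occupying position $\pi(i)$. Each internal node $v$ owns a contiguous range of positions that its two children split into a left range $\mathrm{left}(v)$ and a right range $\mathrm{right}(v)$; let $a_v, b_v \in \{0,1\}^n$ indicate the teams whose position lies in $\mathrm{left}(v)$, respectively $\mathrm{right}(v)$, and for each of the $L := \log_2 N = O(\log n)$ levels set $M_\ell := \sum_{v \text{ at level } \ell} a_v b_v^\top$. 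The first step is to verify the combinatorial identity
\[ \bW_\pi \;=\; \bI_n \;+\; \sum_{\ell=1}^{L} M_\ell. \]
Indeed, for $i \neq j$ let $v^\star$ be the least common ancestor of positions $\pi(i)$ and $\pi(j)$: these lie in opposite children of $v^\star$, so $(a_{v^\star})_i (b_{v^\star})_j = 1$ precisely when $\pi(i) < \pi(j)$, and one checks that every other node contributes $0$ (it either misses one of the two positions or contains both inside a single child); for $i = j$, each $M_\ell(i,i) = 0$ since $a_v$ and $b_v$ have disjoint supports, which is exactly what the $\bI_n$ term corrects.

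Second, since $\sym(\cdot)$ is linear, $\sym(\bW_\pi) = \sym(\bI_n) + \sum_\ell \sym(M_\ell)$. The matrix $\sym(\bI_n)$ has only the eigenvalues $\pm 1$, and its spectral projections give $\sym(\bI_n) = \bP_0 - \bN_0$ with $\bP_0 = \tfrac12\left(\begin{smallmatrix}\bI & \bI\\ \bI & \bI\end{smallmatrix}\right)$ and $\bN_0 = \tfrac12\left(\begin{smallmatrix}\bI & -\bI\\ -\bI & \bI\end{smallmatrix}\right)$, each positive semidefinite with trace $n$ and all diagonal entries equal to $\tfrac12$. For a single level, embed the vectors into the two coordinate blocks of $\R^{2n}$ by $\tilde a_v := (a_v; \bzero)$ and $\tilde b_v := (\bzero; b_v)$, so that $\sym(M_\ell) = \sum_v (\tilde a_v \tilde b_v^\top + \tilde b_v \tilde a_v^\top)$, and apply polarization, $\tilde a_v \tilde b_v^\top + \tilde b_v \tilde a_v^\top = \tfrac12\big[(\tilde a_v + \tilde b_v)(\tilde a_v + \tilde b_v)^\top - (\tilde a_v - \tilde b_v)(\tilde a_v - \tilde b_v)^\top\big]$, to obtain $\sym(M_\ell) = \bP_\ell - \bN_\ell$ with $\bP_\ell, \bN_\ell \succeq \bzero$.

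Third, bound traces and diagonals. One has $\Tr(\bP_\ell) = \Tr(\bN_\ell) = \tfrac12 \sum_v (\|a_v\|^2 + \|b_v\|^2) \le \tfrac12 n$, because the position ranges of the level-$\ell$ nodes are pairwise disjoint and so each team is counted at most once. For the diagonal, in the first $n$ coordinates only the $\tilde a_v$'s are supported and in the last $n$ coordinates only the $\tilde b_v$'s; since at level $\ell$ every coordinate belongs to the range of at most one node, each diagonal entry of $\bP_\ell$ and $\bN_\ell$ is at most $\tfrac12$. Taking $\bP := \bP_0 + \sum_{\ell=1}^L \bP_\ell$ and $\bN := \bN_0 + \sum_{\ell=1}^L \bN_\ell$ gives $\sym(\bW_\pi) = \bP - \bN$ with $\bP, \bN \succeq \bzero$, $\Tr(\bP) + \Tr(\bN) \le 2n + Ln = O(n \log n)$, and every diagonal entry of $\bP, \bN$ at most $\tfrac12(1 + L) = O(\log n)$; hence $\mathcal W$ is $(O(\log n), O(n\log n))$-decomposable. (Via the max-norm/trace-norm connections noted earlier, matching these parameters corresponds to the known facts that the triangular all-ones matrix has max-norm $\Theta(\log n)$ and trace-norm $\Theta(n \log n)$, so they are optimal up to constants.)

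The crux is the first step: finding a telescoping expression for $\bW_\pi$ whose pieces, at each scale, have pairwise disjoint supports. This disjointness is exactly what simultaneously controls the per-level trace (summing to $n$) and --- more delicately --- the per-level diagonal (at most $\tfrac12$), the latter being the condition that rules out the naive decomposition of $\sym(\bW_\pi)$ into its positive and negative eigenspaces. Once the dyadic structure is fixed, symmetrizing each rank-one piece via the block embedding and polarization, together with the two counting bounds, is routine.
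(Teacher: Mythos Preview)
Your proof is correct and is essentially the paper's approach: both use a dyadic (binary-tree) decomposition into $O(\log n)$ levels, each contributing $O(1)$ to the diagonal and $O(n)$ to the trace after polarization. The only cosmetic difference is that the paper first reduces every $\bW_\pi$ to the canonical upper-triangular matrix $\bT$ via conjugation by the permutation matrix $\bP_\pi$ and then constructs the decomposition of $\sym(\bT)$ by induction on $k=\log_2 n$, whereas you unroll that recursion into an explicit level-by-level sum and work with $\bW_\pi$ directly; the resulting $\bP,\bN$ coincide up to a permutation of coordinates.
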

Combining the above with \thmref{thm:Main} yields:
\begin{Cor} \label{cor:gambling}
  There is an efficient algorithm for the online gambling problem with regret bounded by $O(\sqrt{n\log^3(n)T})$. 
\end{Cor}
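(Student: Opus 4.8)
The plan is to obtain this corollary as a direct instantiation of the generic guarantee in \thmref{thm:Main}, using the decomposability parameters supplied by \lemref{lem:gambling}; the only real work is to check that online gambling is a genuine special case of Online Matrix Prediction and then to track the parameters through the bound.

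First I would confirm the reduction to Online Matrix Prediction. The comparison class $\mW$ defined in \eqref{eqn:mWgamble} consists of $n \times n$ matrices $\bW_\pi$ with entries in $\{0,1\} \subset [-1,1]$, hence $\mW \subset [-1,1]^{n\times n}$ as the template demands. On each round the learner plays some $\bW_t \in \conv(\mW)$ and predicts $W_t(i_t,j_t)$; because $\conv(\mW)$ is a set of convex combinations of $\{0,1\}$-matrices, every such prediction lies in $[0,1]$, matching the gambling interpretation of $\hat y_t$ as the probability that $i_t$ beats $j_t$. Finally, the absolute loss $\ell_t(\hat y) = |\hat y - y_t|$ is convex and $1$-Lipschitz, so the framework applies with Lipschitz constant $G = 1$.

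Next I would read off the parameters. By \lemref{lem:gambling}, $\mW$ is $(\beta,\tau)$-decomposable with $\beta = O(\log n)$ and $\tau = O(n\log n)$. Since the matrices $\bW_\pi$ are not symmetric, their symmetrization $\sym(\bW_\pi)$ has order $p = 2n$, so $\log(2p) = \log(4n) = O(\log n)$. Substituting $G=1$ together with these values into the bound of \thmref{thm:Main} gives
\[
\text{Regret} \;\le\; 2G\sqrt{\tau\,\beta\,\log(2p)\,T} \;=\; 2\sqrt{O(n\log n)\cdot O(\log n)\cdot O(\log n)\cdot T} \;=\; O\!\left(\sqrt{n\log^3(n)\,T}\right),
\]
which is the claimed bound, and efficiency is inherited verbatim from the efficient algorithm of \thmref{thm:Main}. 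There is no genuine obstacle here beyond careful bookkeeping: the one place to be cautious is the dimension count, since symmetrization of the non-symmetric gambling matrices doubles the order to $p = 2n$ (rather than $p = n$), and one must confirm that the absolute loss contributes $G = 1$; given these, the three logarithmic factors—one from $\beta$, one from $\tau$, and one from $\log(2p)$—combine into the $\log^3 n$ inside the square root.
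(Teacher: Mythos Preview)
Your proposal is correct and matches the paper's approach: the paper simply states ``Combining the above with \thmref{thm:Main} yields'' the corollary, and your proof spells out precisely this combination with the parameters $\beta = O(\log n)$, $\tau = O(n\log n)$, $G = 1$, and $p = 2n$. The only nuance worth noting is that the paper, in its proof of \lemref{lem:gambling}, separately remarks that the last constraint in~(\ref{eq:valid-pred}) must be tightened to $[0,1]$ so that Algorithm~\ref{mmw-alg-OMP} itself outputs predictions in $[0,1]$; you instead handle this by invoking the template's promise that $\bW_t \in \conv(\mW)$, which is an equally valid reading of \thmref{thm:Main} as stated.
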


This upper bound is near-optimal, as \citet{KNW}
essentially prove the following lower bound on the regret:
\begin{Thm}
  For any algorithm for the online gambling problem, there is a
  sequence of entries $(i_t, j_t)$ and labels $y_t$, for $t = 1, 2,
  \ldots, T$, such that the regret of the algorithm is at least
  $\Omega(\sqrt{n \log(n) T})$.
\end{Thm}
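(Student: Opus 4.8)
The plan is to prove the lower bound via the standard link between the Littlestone dimension of the comparison class and the minimax regret. Write $\mathcal{H}_n := \{(i,j) \mapsto W_\pi(i,j) : \pi \text{ a permutation of } [n]\}$ for the class of permutation predictors, viewed as $\{0,1\}$-valued functions on ordered pairs with $i \neq j$. It suffices to exhibit a \emph{complete} mistake tree for $\mathcal{H}_n$ of depth $D = \Omega(n\log n)$, since a depth-$D$ complete mistake tree forces regret $\Omega(\sqrt{DT})$ under the absolute loss whenever $T \geq D$ (and $T = \Omega(n\log n)$ is the only regime in which the claimed bound is not vacuous, as the regret never exceeds $T$). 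This last implication is the familiar blocking argument, which I would include for concreteness: split the $T$ rounds into $D$ phases of $m = T/D$ rounds; in phase $i$ present, on every round, the pair labelling the current node of the tree, and draw each outcome $y_t \in \{0,1\}$ as an independent fair coin; at the end of the phase descend to the child along the edge labelled by the phase-majority outcome. Against fair coins every learner has expected total loss exactly $T/2$, while the permutation $\pi^\star$ reaching the final leaf agrees with the phase-majority in each phase, so its expected loss in phase $i$ is $\E[\min(S_i, m - S_i)] = \tfrac m2 - \E|S_i - \tfrac m2|$ with $S_i \sim \mathrm{Bin}(m,\tfrac12)$; summing and using the anti-concentration bound $\E|S_i - \tfrac m2| = \Theta(\sqrt m)$ (e.g.\ Khintchine's inequality) gives $\E[\text{loss of }\pi^\star] \le \tfrac T2 - \Omega(D\sqrt m) = \tfrac T2 - \Omega(\sqrt{DT})$. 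Hence the expected regret, and therefore the regret on some realization, is $\Omega(\sqrt{DT})$.

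The core of the argument is the mistake-tree construction, which I would carry out by imitating binary insertion sort, so that the depth becomes $D = \sum_{k=1}^n \lfloor \log_2 k\rfloor = \Theta(n\log n)$. Process the teams $1,2,\dots,n$ in order. When processing team $k$, the edge labels along the current root-to-node path have committed to a total order $e_1 < \cdots < e_{k-1}$ of $\{1,\dots,k-1\}$ that is realizable by some permutation of $[n]$ (the yet-unplaced teams being irrelevant). Team $k$ could be inserted into any of the $k$ gaps of this order; select a contiguous block of $2^{\lfloor\log_2 k\rfloor} \le k$ of these gaps and run an ordinary binary search for the position of $k$ within that block, using at step $m$ the comparison ``is $\pi(k) \le \pi(e_m)$?'', i.e.\ the instance $(k,e_m)$. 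This is a complete binary subtree of depth $\lfloor\log_2 k\rfloor$, each of whose $2^{\lfloor\log_2 k\rfloor}$ leaves places $k$ in a distinct gap and hence yields a genuinely realizable extension of the partial order. Concatenating these subtrees over $k=1,\dots,n$ produces a complete mistake tree of depth $\sum_k \lfloor\log_2 k\rfloor \ge \log_2(n!) - n = \Omega(n\log n)$, every leaf of which is a full total order, i.e.\ a permutation, so this is a valid mistake tree for $\mathcal{H}_n$. Together with the previous paragraph this gives a regret lower bound of $\Omega(\sqrt{n\log n\cdot T})$, matching \corref{cor:gambling} up to logarithmic factors; this is essentially the bound established by \citet{KNW}.

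The step I expect to be the real obstacle is verifying the \emph{completeness} of the mistake tree at each insertion stage: one must choose the binary-search pivots so that \emph{every} one of the $2^{\lfloor\log_2 k\rfloor}$ sign patterns extends to a transitively consistent partial order, which is precisely why the search must be confined to a contiguous block of at most $k$ gaps — searching all $k$ gaps would in general make some answer sequences inconsistent and break completeness. Everything else — evaluating $\sum_{k=1}^n\lfloor\log_2 k\rfloor$, pinning down the constant in the anti-concentration bound, and dealing with the rounding when $D$ does not divide $T$ — is routine bookkeeping.
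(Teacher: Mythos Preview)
Your argument is correct. The paper itself does not supply a proof of this lower bound --- it simply attributes the result to \citet{KNW} --- so there is no in-paper argument to compare against. The route you take (showing the Littlestone dimension of the permutation class is $\Theta(n\log n)$ via a binary-insertion-sort mistake tree, then converting to an $\Omega(\sqrt{DT})$ regret lower bound by the standard blocking argument with i.i.d.\ fair-coin labels) is sound, and the insertion-sort construction you describe does produce a complete mistake tree: restricting each stage to a contiguous block of $2^{\lfloor\log_2 k\rfloor}$ gaps guarantees that every answer sequence at that stage corresponds to a realizable insertion of team $k$, exactly as you anticipate in your final paragraph.
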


\section{The Algorithm for Online Matrix Prediction}
\label{sec:OCO}

In this section we prove \thmref{thm:Main} by constructing an
efficient algorithm for Online Matrix Prediction and analyze its
regret. We start by describing an algorithm for Online Linear
Optimization (OLO) over a certain set of matrices and with a certain set of
linear loss functions. We show later that the Online Matrix Prediction
problem can be reduced to this online convex optimization problem. 

\subsection{The $(\beta,\tau,\gamma)$-OLO problem} 

In this section, all matrices are in the space of real symmetric matrices of size $N \times
N$, which we denote by $\sS^{N \times N}$. 

On each round of online linear optimization, the learner chooses an
element from a convex set $\K$ and the adversary responds with a
linear loss function. In our case, the convex set $\K$ is a subset of the set of matrices with bounded trace and diagonal values:
\[ \K\ \subseteq\ \{\bX \in \sS^{N \times N}:\ \bX \succeq \bzero,\
\forall i \in [N]:\ X_{ii} \leq \beta,\ \Tr(\bX) \leq \tau\}. \] We
assume for convenience that $\frac{\tau}{N}\eye \in \K$.  The loss
function on round $t$ is the function $\bX \mapsto \bX \bullet \bL_t
\stackrel{\mathrm{def}}{=} \sum_{i,j} X(i,j)L_t(i,j)$, where $\bL_t$ is a matrix from
the following set of matrices:
\[ \mL\ =\ \{\bL \in \sS^{N \times N} :\ \bL^2 \stackrel{\mathrm{def}}{=}\bL\bL \text{ is a diagonal
  matrix s.t. } \Tr(\bL^2) \leq \gamma\}.\]
We call the above setting a $(\beta, \gamma, \tau)$-OLO problem.

As usual, we analyze the regret of the algorithm
\[
\text{Regret}:= \sum_{t=1}^T \bX_t \bullet \bL_t - \min_{\bX \in \K} \sum_{t=1}^T \bX \bullet \bL_t ~,
\]
where $\bX_1,\ldots,\bX_T$ are the predictions of the learner. 

Below we describe and analyze an algorithm for the
$(\beta,\gamma,\tau)$-OLO problem. The algorithm, forms of which independently appeared in the work of \citet{TRW} and \citet{AroraKale}, performs exponentiated gradient steps followed by
Bregman projections onto $\K$. The projection operation is defined with
respect to the quantum relative entropy divergence:
\[
\Delta(\bX,\bA) = \Tr( \bX \log(\bX) - \bX \log(\bA) - \bX + \bA).
\]
\begin{algorithm}[H]
\caption{Matrix Multiplicative Weights with Quantum Relative Entropy Projections} \label{mmw-alg}
\begin{algorithmic} [1]
\STATE Input: $\eta$
\STATE Initialize $\bX_1 = \frac{\tau}{N}\eye$.
\FOR{$t=1,2, \ldots, T$:}
\STATE Play the matrix $\bX_t$.
\STATE Obtain loss matrix $\bL_t$.
\STATE \label{eq:X-update} Update $\bX_{t+1} = \arg\min_{\bX \in \K}\Delta(\bX,\ \exp(\log(\bX_t) - \eta \bL_t))$.
\ENDFOR
\end{algorithmic}
\end{algorithm}

Algorithm~\ref{mmw-alg} has the following regret bound (essentially following \citet{TRW,
  AroraKale}, also proved in Appendix~\ref{sec:mmw-analysis} for
completeness):
\begin{Thm} \label{thm:mmw-regret} Suppose $\eta$ is chosen so that
  $\eta\|\bL_t\| \leq 1$ for all $t$ (where $\|\bL_t\|$ is the
  spectral norm of $\bL_t$). Then
  \[ \text{Regret}\ \leq\ \eta \sum_{t=1}^T \bX_t \bullet \bL_t^2 +
  \frac{\tau\log(N)}{\eta}. \]
\end{Thm}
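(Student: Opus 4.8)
The plan is to analyze Algorithm~\ref{mmw-alg} via the standard mirror-descent / follow-the-regularized-leader template, using the quantum relative entropy $\Delta(\cdot,\cdot)$ as the Bregman divergence and exploiting the fact that the update is exactly an exponentiated-gradient step followed by a $\Delta$-projection onto $\K$. First I would write $\bY_{t+1} = \exp(\log(\bX_t) - \eta \bL_t)$ for the unprojected iterate, so that $\bX_{t+1} = \arg\min_{\bX \in \K} \Delta(\bX, \bY_{t+1})$. The key algebraic identity is the three-point (generalized Pythagorean) lemma for Bregman divergences: for any $\bX \in \K$,
\[
\Delta(\bX, \bX_t) - \Delta(\bX, \bY_{t+1}) = \eta\, (\bX_t - \bX) \bullet \bL_t - \Delta(\bX_t, \bY_{t+1}),
\]
which follows because $\nabla(\text{negative von Neumann entropy})(\bX_t) - \nabla(\cdot)(\bY_{t+1}) = \eta \bL_t$ (here one needs $\log(\bY_{t+1}) = \log(\bX_t) - \eta \bL_t$, valid since $\bX_t \succ \bzero$). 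Combined with the projection property $\Delta(\bX, \bX_{t+1}) \le \Delta(\bX, \bY_{t+1})$ (Pythagorean inequality for Bregman projections onto a convex set), summing over $t$ telescopes to give
\[
\eta \sum_{t=1}^T (\bX_t - \bX) \bullet \bL_t \ \le\ \Delta(\bX, \bX_1) - \Delta(\bX, \bX_{T+1}) + \sum_{t=1}^T \Delta(\bX_t, \bY_{t+1}).
\]

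Next I would bound the two remaining terms. For the initialization term, since $\bX_1 = \frac{\tau}{N}\eye$ and any $\bX \in \K$ has $\Tr(\bX) \le \tau$, a direct computation of $\Delta(\bX, \frac{\tau}{N}\eye) = \Tr(\bX \log \bX) - \Tr(\bX)\log(\tau/N) - \Tr(\bX) + \tau$ shows it is at most $\tau \log N$ (the entropy term $\Tr(\bX \log \bX)$ is maximized in the relevant range at the right place; concretely, $\Tr(\bX \log \bX) \le \Tr(\bX) \log \Tr(\bX) \le \tau \log \tau$ by Jensen / convexity of $x \log x$ applied to the eigenvalues, and the terms then combine to $\le \tau \log N$). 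Dropping the nonnegative term $\Delta(\bX, \bX_{T+1}) \ge 0$ is harmless. For the per-step ``stability'' term $\Delta(\bX_t, \bY_{t+1})$, I would use the standard bound for exponentiated-gradient updates: $\Delta(\bX_t, \bY_{t+1}) \le \eta^2\, \bX_t \bullet \bL_t^2$ whenever $\eta \|\bL_t\| \le 1$. This is the Golden–Thompson / matrix-exponential inequality step — one writes $\Delta(\bX_t, \bY_{t+1}) = \Tr(\bX_t \log \bX_t - \bX_t \log \bY_{t+1} - \bX_t + \bY_{t+1}) = \eta\, \bX_t \bullet \bL_t + \Tr(\bY_{t+1}) - \Tr(\bX_t)$, and then bounds $\Tr(\bY_{t+1}) = \Tr(\exp(\log \bX_t - \eta \bL_t)) \le \Tr(\bX_t \exp(-\eta \bL_t))$ by Golden–Thompson, followed by $\exp(-\eta \bL_t) \preceq \eye - \eta \bL_t + \eta^2 \bL_t^2$ valid under $\eta\|\bL_t\| \le 1$ (since $e^{-x} \le 1 - x + x^2$ for $|x| \le 1$), giving $\Tr(\bY_{t+1}) \le \Tr(\bX_t) - \eta\, \bX_t \bullet \bL_t + \eta^2\, \bX_t \bullet \bL_t^2$; the $\bX_t \bullet \bL_t$ terms cancel.

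Putting these together, dividing by $\eta$, and taking the minimum over $\bX \in \K$ yields exactly
\[
\text{Regret} = \sum_{t=1}^T \bX_t \bullet \bL_t - \min_{\bX \in \K}\sum_{t=1}^T \bX \bullet \bL_t \ \le\ \eta \sum_{t=1}^T \bX_t \bullet \bL_t^2 + \frac{\tau \log N}{\eta},
\]
which is the claimed bound. The main obstacle I anticipate is handling the matrix non-commutativity cleanly: the three-point identity and the projection inequality require some care to state for the quantum relative entropy (one must verify $\Delta$ is a genuine Bregman divergence of the strictly convex function $\bX \mapsto \Tr(\bX \log \bX - \bX)$ on positive definite matrices, and that projections onto the convex set $\K$ satisfy the Pythagorean inequality), and the stability bound genuinely needs the Golden–Thompson inequality rather than a naive $\log(\bX_t) - \eta\bL_t$ manipulation, since $\bX_t$ and $\bL_t$ need not commute. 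Everything else is routine once those two matrix-analytic facts are in place; since the theorem statement says this analysis ``essentially follows \citet{TRW, AroraKale},'' I would cite those works for the matrix-exponential lemmas and give the telescoping argument in full.
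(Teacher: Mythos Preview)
Your proposal is correct and takes essentially the same approach as the paper: the paper uses $\Delta(\bX,\bX_t)$ as a potential function, applies the generalized Pythagorean inequality for the quantum-relative-entropy projection, then bounds $\Tr(\exp(\log \bX_t - \eta \bL_t))$ via Golden--Thompson together with $\exp(\bA)\preceq \eye+\bA+\bA^2$ for $\|\bA\|\le 1$, and finally bounds $\Delta(\bX,\bX_1)\le \tau\log N$. Your three-point-lemma / stability-term packaging is an algebraically equivalent reformulation of the same computation (your $\Delta(\bX_t,\bY_{t+1}) = \eta\,\bX_t\bullet\bL_t - \Tr(\bX_t) + \Tr(\bY_{t+1})$ is exactly the term the paper bounds), so there is no substantive difference.
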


Equipped with the above we are ready to prove a regret bound for
$(\beta, \gamma, \tau)$-OLO. 

\begin{Thm} \label{thm:oco-regret} Assume $T \geq
  \frac{\tau\log(N)}{\beta}$. Then, applying Algorithm~\ref{mmw-alg}
  with $\eta = \sqrt{\frac{\tau\log(N)}{\beta \gamma T}}$ on a
  $(\beta, \gamma, \tau)$-OLO problem yields an efficient algorithm whose
  regret is at most $2\sqrt{\beta \gamma \tau \log(N) T}$.
\end{Thm}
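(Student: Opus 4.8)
The plan is to combine the generic Matrix Multiplicative Weights regret bound of \thmref{thm:mmw-regret} with the structural constraints that define the $(\beta,\gamma,\tau)$-OLO problem, and then optimize the learning rate $\eta$. First I would verify that the choice $\eta = \sqrt{\frac{\tau\log(N)}{\beta\gamma T}}$ is admissible for \thmref{thm:mmw-regret}, i.e. that $\eta\|\bL_t\| \le 1$ for every $t$. Since $\bL_t^2$ is diagonal with $\Tr(\bL_t^2) \le \gamma$, the spectral norm satisfies $\|\bL_t\|^2 = \|\bL_t^2\| \le \Tr(\bL_t^2) \le \gamma$, so $\|\bL_t\| \le \sqrt{\gamma}$. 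Then $\eta\|\bL_t\| \le \sqrt{\frac{\tau\log(N)\gamma}{\beta\gamma T}} = \sqrt{\frac{\tau\log(N)}{\beta T}} \le 1$, where the last inequality is exactly the assumption $T \ge \frac{\tau\log(N)}{\beta}$. So \thmref{thm:mmw-regret} applies.

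Next I would bound the data-dependent term $\sum_{t=1}^T \bX_t \bullet \bL_t^2$ that appears in \thmref{thm:mmw-regret}. The key observation is that $\bL_t^2$ is diagonal, say with entries $d_1,\dots,d_N \ge 0$ summing to at most $\gamma$, so $\bX_t \bullet \bL_t^2 = \sum_i X_t(i,i) d_i$. Since $\bX_t \in \K$, we have $X_t(i,i) \le \beta$ for all $i$, hence $\bX_t \bullet \bL_t^2 \le \beta \sum_i d_i = \beta\,\Tr(\bL_t^2) \le \beta\gamma$. Summing over $t$ gives $\sum_{t=1}^T \bX_t \bullet \bL_t^2 \le \beta\gamma T$. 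Plugging this into the bound of \thmref{thm:mmw-regret} yields
\[
\text{Regret} \le \eta\,\beta\gamma T + \frac{\tau\log(N)}{\eta}.
\]

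Finally I would substitute $\eta = \sqrt{\frac{\tau\log(N)}{\beta\gamma T}}$: the first term becomes $\sqrt{\frac{\tau\log(N)}{\beta\gamma T}}\cdot\beta\gamma T = \sqrt{\beta\gamma\tau\log(N)T}$, and the second term becomes $\tau\log(N)\cdot\sqrt{\frac{\beta\gamma T}{\tau\log(N)}} = \sqrt{\beta\gamma\tau\log(N)T}$ as well, for a total of $2\sqrt{\beta\gamma\tau\log(N)T}$, as claimed. Efficiency follows because each iteration of Algorithm~\ref{mmw-alg} consists of a matrix exponential computation and a single Bregman (quantum relative entropy) projection onto the convex set $\K$, both of which are polynomial-time operations. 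I do not anticipate a serious obstacle here: the only mild subtlety is checking the admissibility of $\eta$ via the spectral-norm bound $\|\bL_t\| \le \sqrt{\gamma}$, and noting that the balancing of the two terms in the regret bound is exactly what motivates the stated choice of $\eta$.
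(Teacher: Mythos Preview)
Your proposal is correct and follows essentially the same approach as the paper's own proof: bound $\bX_t \bullet \bL_t^2 \le \beta\gamma$ using that $\bL_t^2$ is diagonal and $X_t(i,i)\le\beta$, plug into \thmref{thm:mmw-regret}, optimize $\eta$, and verify the admissibility condition $\eta\|\bL_t\|\le 1$ via $\|\bL_t\|\le\sqrt{\gamma}$ together with the hypothesis $T\ge\tau\log(N)/\beta$. The only cosmetic differences are the order of the steps and that the paper bounds $\|\bL_t\|$ via the Frobenius norm $\|\bL_t\|_F=\sqrt{\Tr(\bL_t^2)}$ rather than your equivalent $\|\bL_t\|^2=\|\bL_t^2\|\le\Tr(\bL_t^2)$.
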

\begin{proof}
  Clearly, Algorithm~\ref{mmw-alg}) can be implemented in polynomial
  time since the update of step~\ref{eq:X-update} is a convex optimization problem. To analyze the regret of the algorithm we rely on
  \thmref{thm:mmw-regret}. By the definition of $\K$ and $\mL$, we get
  that $\bX_t \bullet \bL_t^2 \leq \beta \gamma$.  Hence, the regret
  bound becomes
  \[ \text{Regret}\ \leq\ \eta \beta \gamma T +
  \frac{\tau\log(N)}{\eta}.\] Substituting the value of $\eta$, we get
  the stated regret bound. One technical condition is that the above
  regret bound holds as long as $\eta$ is chosen small enough so that
  for all $t$, we have $\eta\|\bL_t\| \leq 1$. Now $\|\bL_t\| \leq
  \|\bL_t\|_F = \sqrt{\Tr(\bL_t^2)} \leq \sqrt{\gamma}$.  Thus, for $T
  \geq \frac{\tau\log(N)}{\beta}$, the technical condition is
  satisfied for $\eta = \sqrt{\frac{\tau\log(N)}{\beta \gamma T}}$.
\end{proof}

\subsection{An Algorithm for the Online Matrix Prediction Problem} \label{sec:oco}

\def\bWhat{\sym(\bW)}
\def\phiinv{\phi^{-1}}

In this section we describe a reduction from the Online Matrix
Prediction problem (with a $(\beta,\tau)$-decomposable comparison
class) to a $(\beta, 4G^2, \tau)$-OCO problem with $N = 2p$. The
regret bound of the derived algorithm will follow directly from
\thmref{thm:oco-regret}.

We now describe the reduction. To simplify our notation, let $q$ be
$m$ if $\mW$ contains non-symmetric matrices and $q=0$ otherwise. Note
that the definition of $\sym(\bW)$ implies that for a pair of indices
$(i, j) \in [m] \times [n]$, their corresponding indices in
$\sym(\bW)$ are $(i,j+q)$.

Given any matrix $\bW \in \mW$ we embed its symmetrization $\bWhat$
(which has size $p \times p$) into the set of $2p \times 2p$ positive
semidefinite matrices as follows. Since $\bW$ admits a $(\beta,
\tau)$-decomposition, there exist $\bP, \bN \succeq \bzero$ such that
$\bWhat = \bP - \bN$, $\Tr(\bP) + \Tr(\bN) \leq \tau$, and for all $i
\in [p]$, $P(i,i), N(i,i) \leq \beta$. The embedding of $\bW$ in
$\sS^{2p \times 2p}$, denoted $\phi(\bW)$, is defined to be the
matrix\footnote{Note that this mapping depends on the choice of $\bP$
  and $\bN$ for each matrix $\bW \in \mW$. We make an arbitrary choice
  for each $\bW$.}
\[\phi(\bW) ~=~ \left[
    \begin{array}{cc}
      \bP & \bzero \\
      \bzero & \bN \\
    \end{array}
  \right].\] 
It is easy to verify that $\phi(\bW)$ belongs to the convex set $\K$
defined below: 

\begin{align}
	\K:=\ \Biggl\{&~~\bX \in  \sS^{2p \times 2p}\ \text{ s.t. } \notag \\
&\bX \succeq \bzero \label{eq:valid-pred}\\
&	\forall i \in [2p]:\ X(i, i)\ \leq\ \beta \notag \\
&	\Tr(\bX)\ \leq\ \tau \notag \\
&	\forall (i, j) \in [m] \times [n]:\ (X(i, j+q) - X(p+i,p+j+q))\ \in [-1, 1] 
~~\Biggr\} \notag
\end{align}

We shall run the OLO algorithm with the set $\K$. On round $t$, if the
adversary gives the pair $(i_t, j_t)$, then we predict
\[
\hat{y}_t = X_t(i_t, j_t+q) - X_t(p+i_t,p+j_t+q) ~.
\]
The last constraint defining $\K$ simply ensures that $\hat{y}_t \in
[-1, 1]$. While this constraint makes the quantum relative entropy
projection onto $\K$ more complex, in Appendix~\ref{sec:faster-alg} we
show how we can leverage the knowledge of $(i_t, j_t)$ to get a very
fast implementation.

Next we describe how to choose the loss matrices $\bL_t$ using the
subderivative of $\ell_t$. Given the loss function $\ell_t$, let $g$
be a subderivative of $\ell_t$ at $\hat{y}_t$. Since $\ell_t$ is
convex and $G$-Lipschitz, we have that $|g| \le G$. Define $\bL_t \in
\sS^{2p \times 2p}$ as follows:
\begin{equation} \label{eqn:Ltdef}
L_t(i,j) = \begin{cases}
g ~~& \textrm{if}~ (i,j)=(i_t,j_t+q) \text{ or } (i,j) = (j_t+q,i_t) \\
-g & \textrm{if}~ (i,j)=(p+i_t,p+j_t+q) \text{ or } (i,j) = (p+j_t+q,p+i_t)
\\
0 & \textrm{otherwise}.
\end{cases}
\end{equation}
Note that $\bL_t^2$ is a diagonal matrix, whose only non-zero diagonal
entries are $(i_t+q, i_t+q)$, $(j_t+q, j_t+q)$, $(p+i_t+q, p+i_t+q)$,
and $(p+j_t+q, p+j_t+q)$, all equalling $g^2$. Hence, $\Tr(\bL_t^2) =
4g^2 \leq 4G^2$.

To summarize, the Online Matrix Prediction algorithm will be as follows:
\begin{algorithm}[H]
\caption{Matrix Multiplicative Weights for Online Matrix Prediction} \label{mmw-alg-OMP}
\begin{algorithmic} [1]
\STATE Input: $\beta,\tau,G,m,n,p,q$ (see text for definitions)
\STATE Set: $\gamma = 4G^2$, $N=2p$, $\eta =
\sqrt{\frac{\tau\log(N)}{\beta \gamma T}}$
\STATE Let $\K$ be as defined in (\ref{eq:valid-pred})
\STATE Initialize $\bX_1 = \frac{\tau}{N}\eye$.
\FOR{$t=1,2, \ldots, T$:}
\STATE Adversary supplies a pair of indices $(i_t, j_t) \in [m] \times [n]$.
\STATE Predict $\hat{y}_t = X_t(i_t, j_t+q) - X_t(p+i_t,p+j_t+q)$.
\STATE Obtain loss function $\ell_t : [-1,1] \to \reals$ and pay $\ell_t(\hat{y}_t)$.
\STATE Let $g$ be a sub-derivative of $\ell_t$ at $\hat{y}_t$
\STATE Let $\bL_t$ be as defined in (\ref{eqn:Ltdef})
\STATE Update $\bX_{t+1} = \arg\min_{\bX \in \K}\Delta(\bX,\ \exp(\log(\bX_t) - \eta \bL_t))$.
\ENDFOR
\end{algorithmic}
\end{algorithm}

To analyze the algorithm, note that for any $\bW \in \mW$,
\[ \phi(\bW) \bullet \bL_t = 2g(P(i_t,j_t)-N(i_t,j_t)) = 2g W(i_t, j_t), \]
and \[ \bX_t \bullet \bL_t\ =\ 2g (X_t(i_t, j_t+q) - X_t(p+i_t, p+j_t+q))\ =\ 2g\hat{y}_t. \]
So for any $\bW \in \mW$, we have
\begin{align*}
	\bX_t \bullet \bL_t - \phi(\bW) \bullet \bL_t\ &=\ 2g(\hat{y}_t - W(i_t, j_t))\\
	&\geq\ 2(\ell_t(\hat{y}_t) - \ell_t(W(i_t, j_t))), 
\end{align*}
by the convexity of $\ell_t(\cdot)$. This implies that for any $\bW \in \mW$,
\[ \sum_{t=1}^T \ell_t(\hat{y}_t) - \ell_t(W(i_t, j_t))\ \leq\
\frac{1}{2}\left[\sum_{t=1}^T \bX_t \bullet \bL_t - \phi(\bW) \bullet
  \bL_t\right]\ \leq\ \frac{1}{2} \cdot \text{Regret}_\text{OLO}.\]
Thus, the regret of the Online Matrix Prediction problem is at most half the regret in the
$(\beta, 4G^2, \tau)$-OLO problem.

\subsubsection{Proof of \thmref{thm:Main}}
Following our reduction, we can now appeal to
Theorem~\ref{thm:oco-regret}. For $T \geq \frac{\tau
  \log(2p)}{\beta}$, the bound of Theorem~\ref{thm:oco-regret} applies
and gives a regret bound of $2G\sqrt{\tau \beta \log(2p) T}$. For $T <
\frac{\tau \log(2p)}{\beta}$, note that in any round, the regret can
be at most $2G$, since the subderivatives of the loss functions are
bounded in absolute value by $G$ and the domain is $[-1, 1]$, so the
regret is bounded by $2GT < 2G\sqrt{\tau \beta \log(2p) T}$ since
$\beta \geq 1$. Thus, we have proved the regret bound stated in
\thmref{thm:Main}.

\section{Decomposability Proofs} 

In this section we prove the decomposability results for the
comparison classes corresponds to max-cut, collaborative filtering, and
gambling. All the three decompositions we give are optimal up to constant factors.

\subsection{Proof of \lemref{lem:max-cut-decomposable} (max-cut)}

We need to show that every matrix $\bW_A \in \mW$ admits a
$(1,n)$-decomposition. We can rewrite $\bW_A = - \bw_A \bw^\top$ where
$\bw_A \in \reals^n$ is the vector such that
$$ W_A(i) = \begin{cases} 
1\ &\ \text{if } i \in A  \\
-1\ &\ \text{otherwise.}
\end{cases}
$$ 
Since $\bW_A$ is already symmetric, $\sym(\bW_A) = \bW_A = -\bw_A
\bw_A^\top$. Thus we can choose $\bP=\bzero$ and $\bN=\bw_A
\bw_A^\top$. These are positive semidefinite matrices with diagonals
bounded by $1$ and sum of traces equals to $n$, which concludes the proof. Since $\Tr(\bw_A\bw_A^\top) = n$, this
$(1, n)$-decomposition is optimal.

\subsection{Proof of \lemref{lem:cf-decomposable} (collaborative filtering)}

We need to show that every matrix $\bW \in \mW$, i.e. an $m \times n$
matrix over $[-1,1]$ with $\|\bW\|_\star \leq \tau$, admits a
$(\sqrt{m+n}, 2\tau)$-decomposition. The $(\sqrt{m+n},
2\tau)$-decomposition of $\bW$ is a direct consequence of the
following theorem, setting $\bY = \sym(\bW)$, with $p = m + n$, and
the fact that $\|\sym(\bW)\|_\star = 2\|\bW\|_\star$ (see \lemref{lem:sym}).
\begin{Thm} \label{thm:matrix-decomp}
Let $\bY$ be a $p \times p$ symmetric matrix with entries in $[-1, 1]$. Then 
$\bY$ can be written as $\bY = \bP - \bN$ where $\bP$ and $\bN$ are both 
positive semidefinite matrices with diagonal entries bounded by $\sqrt{p}$, 
and $\Tr(\bP) + \Tr(\bN) = \|\bY\|_\star$.
\end{Thm}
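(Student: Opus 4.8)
The plan is to exhibit the most natural candidate decomposition — the splitting of $\bY$ into its positive and negative spectral parts — and observe that the diagonal bound is then a two‑line consequence of the entrywise hypothesis. \textbf{Step 1 (the decomposition).} By the spectral theorem write $\bY = \sum_i \lambda_i v_i v_i^\top$ for an orthonormal eigenbasis $v_1,\dots,v_p$, and set
\[ \bP \;=\; \sum_{i:\,\lambda_i>0} \lambda_i\, v_i v_i^\top, \qquad \bN \;=\; \sum_{i:\,\lambda_i<0} (-\lambda_i)\, v_i v_i^\top . \]
These are positive semidefinite, satisfy $\bP - \bN = \bY$ and $\bP + \bN = |\bY| := (\bY^2)^{1/2}$, and $\Tr(\bP) + \Tr(\bN) = \sum_i |\lambda_i| = \|\bY\|_\star$. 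So two of the three required conditions hold by construction, and the trace condition holds with exact equality — which is optimal, since $\Tr(\bP') + \Tr(\bN') \ge \|\bY\|_\star$ for every PSD pair with $\bP' - \bN' = \bY$ by trace‑norm/spectral‑norm duality.

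\textbf{Step 2 (the diagonal bound).} It remains to check $P(i,i), N(i,i) \le \sqrt p$ for all $i$. From $P(i,i) - N(i,i) = Y_{ii} \in [-1,1]$ and $P(i,i) + N(i,i) = |\bY|_{ii}$ we get $P(i,i), N(i,i) \le \tfrac12\big(|\bY|_{ii} + 1\big)$, so it suffices to show $|\bY|_{ii} \le \sqrt p$. Since $\bY$ is symmetric, $|\bY|^2 = \bY^2$, hence by Cauchy–Schwarz
\[ |\bY|_{ii} \;=\; \langle e_i,\, |\bY|\, e_i \rangle \;\le\; \|e_i\|\,\big\| |\bY|\, e_i \big\| \;=\; \langle e_i,\, |\bY|^2 e_i\rangle^{1/2} \;=\; \langle e_i,\, \bY^2 e_i\rangle^{1/2} \;=\; \|\bY e_i\| \;=\; \Big(\textstyle\sum_j Y_{ij}^2\Big)^{1/2} \;\le\; \sqrt p , \]
using $|Y_{ij}| \le 1$ at the end. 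Therefore $P(i,i), N(i,i) \le \tfrac12(\sqrt p + 1) \le \sqrt p$ for $p \ge 1$, which would complete the proof of the theorem; applying it to $\bY = \sym(\bW)$, which has order $m+n$ and trace norm $2\|\bW\|_\star \le 2\tau$ by \lemref{lem:sym}, then yields \lemref{lem:cf-decomposable}.

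\textbf{Main obstacle.} The only non‑bookkeeping point is the inequality $|\bY|_{ii} \le \sqrt p$: a priori $|\bY|_{ii}$ could be feared to be as large as $\|\bY\|$, which may be of order $p$ for an entrywise‑bounded matrix, so the crux is that passing to the square root redistributes the large eigenvalue. This is exactly what the Cauchy–Schwarz step captures (equivalently, Jensen's inequality for the concave map $t \mapsto \sqrt t$ against the probability weights $\langle e_i, w_j\rangle^2$, where the $w_j$ are the eigenvectors of $\bY^2$). The remaining details — assigning the zero eigenvalues to neither $\bP$ nor $\bN$, and the trivial case $p=1$ — are immediate.
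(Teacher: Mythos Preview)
Your proof is correct and follows essentially the same route as the paper: the same spectral splitting $\bP,\bN$, the same identity $|\bY|^2=\bY^2$, and the same observation that $(\bY^2)_{ii}=\sum_j Y_{ij}^2\le p$ forces $|\bY|_{ii}\le\sqrt p$. The only cosmetic differences are that the paper phrases the last step as ``the diagonal of $\abs(\bY)^2$ is the squared row norm of $\abs(\bY)$'' rather than via Cauchy--Schwarz, and it bounds $P(i,i),N(i,i)\le |\bY|_{ii}$ directly from nonnegativity of PSD diagonals instead of going through $\tfrac12(|\bY|_{ii}+1)$.
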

\begin{proof}
Let
\[\bY = \sum_i \lambda_i \bv_i\bv_i^\top\]
be the eigenvalue decomposition of $\bY$. We now show that
\[\bP = \sum_{i:\ \lambda_i \geq 0} \lambda_i \bv_i\bv_i^\top \text{ and } \bN 
= \sum_{i:\ \lambda_i < 0} -\lambda_i \bv_i\bv_i^\top\] satisfy the required 
conditions. Clearly $\Tr(\bP) + \Tr(\bN) = \sum_i |\lambda_i| = 
\|\bY\|_\star$. Define $\abs(\bY) = \bP + \bN = \sum_i 
|\lambda_i|\bv_i\bv_i^\top$. Note that 
\[\abs(\bY)^2\ =\ \sum_i \lambda_i^2 \bv_i\bv_i^\top\ =\ \bY^2.\]
We now show that {\em all} entries (and in particular, the diagonal
entries) of $\abs(\bY)$ are bounded in magnitude by $\sqrt{p}$. Since
$\bP$ and $\bN$ are both positive semidefinite, their diagonal
elements must be non-negative, so we conclude that the diagonal
entries of $\bP$ and $\bN$ are bounded by $\sqrt{p}$ as well.

Since all the entries of $\bY$ are bounded in magnitude by $1$, it follows 
that all entries of $\bY^2$ are bounded in magnitude by $p$. In particular, 
the diagonal entries of $\bY^2$ are bounded by $p$. Since these diagonal 
entries are equal to the squared lengths of the rows of $\abs(\bY)$, it 
follows that each entry of $\abs(\bY)$ is bounded in magnitude by $\sqrt{p}$.
\end{proof}

This decomposition is optimal up to constant factors. Consider the
matrix $\bW$ formed by taking $m = \frac{\tau}{\sqrt{n}}$ rows of an
$n \times n$ Hadamard matrix. In Theorem~\ref{thm:ocf-decomp-optimal} (proved in Appendix~\ref{sec:cf-optimality}),
we prove that any $(\beta, \tilde{\tau})$-decomposition of $\sym(\bW)$
must have $\beta\tilde{\tau} \geq \frac{1}{4}\tau\sqrt{n}$. Since the
regret bound depends on the product $\beta \tilde{\tau}$, we conclude
that the decomposition obtained from \thmref{thm:matrix-decomp} is optimal up to a constant factor.

\subsection{Proof of \lemref{lem:gambling} (gambling)}

We need to show that every matrix $\bW \in \mW$, i.e. an $n \times n$
matrix $\bW_\pi$ for some permutation $\pi:[n] \rightarrow [n]$,
admits a $(O(\log(n)), O(n\log(n)))$-decomposition. One minor change
that needs to be made to Algorithm \ref{mmw-alg-OMP} is that the last
constraint in (\ref{eq:valid-pred}) needs to be changed to
\[ \forall (i, j) \in [n] \times [n]:\ (X(i, j+q) - X(p+i,p+j+q))\ \in [0, 1], \]
to ensure that the prediction lies in $[0, 1]$ rather than $[-1,
1]$. The analysis remains intact, and so does the regret bound.

We now give the decomposition. The following upper triangular matrix $\bT$ plays a pivotal role:
\[ T(i, j) = \begin{cases}
	1\ &\text{ if } i \leq j\\
	0\ &\text{ otherwise.}
\end{cases}\]
The reason this matrix is so important is because any matrix $\bW_\pi$ is obtained by permuting the rows and columns of $\bT$. In particular, let $\bP_\pi$ be the permutation matrix defined by the permutation $\pi$, i.e.
\[
P_\pi(i, j) = \begin{cases}
	1\ &\text{ if } j = \pi(i)\\
	0\ &\text{ otherwise.}
\end{cases}
\]
Then it is easy to check that
\[ \bW_\pi = \bP_\pi \bT \bP_\pi^\top. \]
Using this fact, we get
\begin{align*}
\underbrace{\left[ \begin{array}{ccc}
\bP_\pi & \bzero \\
\bzero & \bP_\pi  \end{array} \right]}_{\bQ_\pi} \sym(\bT)  \left[ \begin{array}{ccc}
\bP_\pi^\top & \bzero \\
\bzero & \bP_\pi^\top  \end{array} \right]\ &=\ \left[ \begin{array}{ccc}
\bP_\pi & \bzero \\
\bzero & \bP_\pi  \end{array} \right] \left[ \begin{array}{ccc}
\bzero & \bT \\
\bT^\top & \bzero \end{array} \right] \left[ \begin{array}{ccc}
\bP_\pi^\top & \bzero \\
\bzero & \bP_\pi^\top  \end{array} \right]
\\
&=\ \left[ \begin{array}{ccc}
\bzero & \bP_\pi \bT \bP_\pi^\top \\
\bP_\pi \bT^\top \bP_\pi^\top & \bzero \end{array} \right]\\
&=\ \left[ \begin{array}{ccc}
\bzero & \bW_\pi \\
\bW_\pi^\top & \bzero \end{array} \right]\ =\ \sym(\bW_\pi).
\end{align*}
Now, note that $\bQ_\pi$ is a permutation matrix (viz. the one defined by the permutation $\pi':[2n] \rightarrow [2n]$ defined as $\pi'(i) = \pi(i)$ for $1 \leq i \leq n$, and $\pi'(i) = \pi(i - n) + n$ for $n < i \leq 2n$). Thus, if $\bT$ admits a $(\beta, \tau)$-decomposition, $\sym(\bT) = \bP - \bN$, then 
\[ \sym(\bW_\pi)\ =\ \bQ_\pi \sym(\bT)\bQ_\pi^\top\ =\ \bQ_\pi\bP\bQ_\pi^\top - \bQ_\pi \bN\bQ_\pi^\top \]
is a $(\beta, \tau)$-decomposition for $\sym(\bW_\pi)$. This is because the diagonal entries of $\bQ_\pi\bP\bQ_\pi^\top$ (resp. $\bQ_\pi\bN\bQ_\pi^\top$) are simply a permutation (viz. $\pi'$) of the diagonal entries of $\bP$ (resp. $\bN$). Since $\bA\bB\bA^\top \succeq \bzero$ if $\bB \succeq \bzero$ for any matrix $\bA$, the matrices $\bQ_\pi\bP\bQ_\pi^\top$ and $\bQ_\pi\bP\bQ_\pi^\top$ are both positive semidefinite.

So now we show that $\bT$ admits a $(O(\log(n)), O(n\log(n)))$-decomposition. For convenience, we assume that $n$ is a power of $2$, i.e. $n = 2^k$ for some integer $k \geq 0$. For $n$ that are not a power of $2$, we can readily obtain a decomposition by the following observation: if we take the smallest power of $2$ that is larger than $n$, say $2^k$, and consider the symmetrized triangular matrix for $2^k$, then $\sym(\bT)$ can be expressed as a principal submatrix of it. Then taking the corresponding principal submatrices from the decomposition for the triangular matrix for $2^k$ we obtain a decomposition for $n$. This uses the fact that principal submatrices of positive semidefinite matrices are positive semidefinite as well. 

\begin{Thm}
	Let $n = 2^k$ for some integer $k \geq 0$. Then
	$\bT$ admits a $(k+1, 4n(k+1))$-decomposition.
\end{Thm}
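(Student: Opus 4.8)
The plan is to build an explicit $(k+1, 4n(k+1))$-decomposition of the $n\times n$ upper-triangular all-ones matrix $\bT$ by a recursive dyadic construction on the index set $[n]$, exploiting the self-similar block structure of $\bT$. Write $n = 2^k$ and partition $[n]$ into the first half $I_1 = \{1,\dots,n/2\}$ and second half $I_2 = \{n/2+1,\dots,n\}$. In block form,
\[
\bT \;=\; \begin{bmatrix} \bT_{n/2} & \bJ \\ \bzero & \bT_{n/2} \end{bmatrix},
\]
where $\bT_{n/2}$ is the $(n/2)\times(n/2)$ upper-triangular all-ones matrix and $\bJ$ is the $(n/2)\times(n/2)$ all-ones matrix. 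Since $\sym$ of a matrix is additive, $\sym(\bT)$ is the sum of $\sym$ applied to the block-diagonal part $\operatorname{diag}(\bT_{n/2},\bT_{n/2})$ and $\sym$ applied to the off-diagonal block containing $\bJ$. The first piece, by recursion, has a $(k, 4(n/2)k \cdot 2)$-ish decomposition (two copies on disjoint index sets, so the diagonal bound does not add up and the trace is twice a single copy); the second piece is handled directly since $\sym$ of a single all-ones rectangular block $\bJ$ equals a rank-one PSD matrix $\bu\bu^\top$ on the relevant $n$ coordinates minus nothing, contributing diagonal $1$ and trace $n$ at this level. Summing over the $k$ recursion levels gives diagonal bound $k\cdot 1$ plus the base case, i.e. $k+1$, and total trace $k\cdot n \cdot (\text{constant})$, which is where the $4n(k+1)$ comes from.

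Concretely, I would define for each level $\ell = 0,1,\dots,k-1$ and each of the $2^\ell$ dyadic intervals at that level a rank-one PSD matrix supported on the $2^{k-\ell}$ coordinates of that interval (together with their "mirror" copies in $\sym$), encoding the constant off-diagonal block that appears when splitting that interval in half. The key identity to verify is that $\sym(\bT)$ equals the telescoping sum over all these levels of $\bP_\ell - \bN_\ell$ where each $\bP_\ell, \bN_\ell$ is a sum of $2^\ell$ disjointly-supported rank-one (or small-rank) PSD matrices with diagonal entries at most $1$ and total trace $O(n)$. Disjointness of supports within a level is what keeps the diagonal bound additive-in-levels rather than additive-in-intervals; there are $k$ levels plus a base case, giving the $k+1$ diagonal bound. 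For the off-diagonal (non-symmetric) block of $\bJ$, one uses that $\sym(\bJ) = \bigl[\begin{smallmatrix}\bzero & \bJ\\ \bJ^\top & \bzero\end{smallmatrix}\bigr] = \tfrac12(\bone\bone^\top) - \tfrac12(\tilde\bone\tilde\bone^\top)$ for appropriate $\pm 1$ vectors $\bone, \tilde\bone$ of length $2\cdot 2^{k-\ell}$, each contributing diagonal $\tfrac12$ and trace $2^{k-\ell}$ to $\bP_\ell$ and to $\bN_\ell$; summing $2^\ell$ disjoint such terms at level $\ell$ gives trace $2^\ell \cdot 2^{k-\ell} = n$ per level to each of $\bP$ and $\bN$, hence $2n$ per level total, $2nk$ over all levels, plus the base contribution — comfortably within $4n(k+1)$.

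The main obstacle, and the step deserving the most care, is bookkeeping the diagonal bound: I must make sure that when I decompose $\sym(\bJ)$ into $\pm$ rank-one pieces the diagonal contribution at each level is a genuine constant (here $\tfrac12$ from each of the two rank-one pieces, so $1$ total per level from $\bP$ and also from $\bN$ — or whatever small constant the telescoping forces), and that across the $k$ levels these simply add to $O(k)$ because any fixed coordinate $i \in [n]$ lies in exactly one dyadic interval at each level, so it receives exactly one contribution per level. Verifying the exact constant requires writing out the base case ($k=0$, where $\bT = [1]$ and the trivial decomposition $\bP = [1], \bN = [0]$ gives diagonal $1$, trace $1$) and checking the induction inequality $\beta_k \le \beta_{k-1} + 1$ and $\tau_k \le 2\tau_{k-1} + cn$ closes to $\beta_k \le k+1$ and $\tau_k \le 4n(k+1)$; the factor $4$ is the slack that absorbs the doubling recursion on the trace. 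A secondary check is that each $\bP_\ell$ and $\bN_\ell$ is genuinely PSD — immediate since each is a nonnegative combination of outer products — and that the supports are handled correctly under the $\sym$ embedding (coordinates $1,\dots,n$ versus $n+1,\dots,2n$), which is routine but must be stated.
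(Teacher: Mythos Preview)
Your approach is essentially the paper's: recurse on the block structure $\bT_k = \bigl[\begin{smallmatrix}\bT_{k-1} & \bone \\ \bzero & \bT_{k-1}\end{smallmatrix}\bigr]$, handle the all-ones off-diagonal block with a diagonal-$1$ decomposition, and lift the two $\bT_{k-1}$ copies from the induction hypothesis with diagonal bound $k$, yielding $k+1$ in total (the paper presents this as an induction with explicit $4\times 4$ block formulas rather than your unrolled levels, but the content is the same). Two small things to tighten: your early claim that $\sym(\bJ)$ is a single rank-one PSD matrix ``minus nothing'' is false (the matrix is indefinite --- your later $\tfrac12\bone\bone^\top - \tfrac12\tilde\bone\tilde\bone^\top$ formula is what is actually needed), and embedding the inductive decomposition of $\sym(\bT_{k-1})$ into $\sym(\operatorname{diag}(\bT_{k-1},\bT_{k-1}))$ is not literally ``two copies on disjoint index sets'' because $\sym$ reorders the row and column blocks; the paper handles this by an explicit permutation of the $2\times 2$ block pieces of the inductive $\bP,\bN$, which you should spell out.
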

\begin{proof}
	We show that $\sym(\bT)$ can be written as a difference of positive semidefinite matrices with diagonals bounded by $k+1$. The bound on the sum of traces, $4n(k+1)$, of the two matrices follows trivially.
		
	We use a recursive construction. Let the triangular matrix for $n = 2^k$ be denoted by $\bT_k$. For $k = 0$, the following is a decomposition for $\bT_0$ with diagonals bounded by $1$:
	\[ \sym(\bT_0) = \left[
    \begin{array}{cc}
      0 & 1 \\
      1 & 0
    \end{array}
  \right] = \left[
    \begin{array}{cc}
      1 & 1 \\
      1 & 1
    \end{array}
  \right]  - \left[
    \begin{array}{cc}
      1 & 0 \\
      0 & 1
    \end{array}
  \right] .\]
  So now assume that $k > 0$ and we have a decomposition for $\bT_{k-1}$ with diagonals bounded by $k$, i.e. 
  \[ \sym(\bT_{k-1})\ =\ \left[
    \begin{array}{cc}
      \bzero & \bT_{k-1} \\
      \bT_{k-1}^\top & \bzero
    \end{array}
  \right]\ =\ \bP - \bN, \]
  where $\bP, \bN \succeq \bzero$, and for all $i \in [2^k]$, $P(i,i), N(i, i) \leq k$.
  We need the following block decomposition of $\bP$ and $\bN$ into contiguous $2^{k-1} \times 2^{k-1}$ blocks as follows:
\[\bP = \left[
    \begin{array}{cc}
      \bP^A & \bP^B \\
      \bP^C & \bP^D
    \end{array}
  \right] \text{ and } \bN = \left[
    \begin{array}{cc}
      \bN^A & \bN^B \\
      \bN^C & \bN^D
    \end{array}
  \right].\]
  Then we have the following decomposition of $\sym(\bT_k)$. All the blocks in the decomposition below are of size $2^{k-1} \times 2^{k-1}$.
  \[
  \sym(\bT_k) = \left[
    \begin{array}{cccc}
      \bzero & \bzero & \bzero & \bone \\
      \bzero & \bzero & \bzero & \bzero \\
      \bzero & \bzero & \bzero & \bzero \\
      \bone & \bzero & \bzero & \bzero 
    \end{array}
  \right] +  \left[
    \begin{array}{cccc}
      \bzero & \bzero & \bT_{k-1} & \bzero \\
      \bzero & \bzero & \bzero & \bT_{k-1} \\
      \bT_{k-1}^\top & \bzero & \bzero & \bzero \\
      \bzero & \bT_{k-1}^\top & \bzero & \bzero 
    \end{array}
  \right].\]
  Now, consider the following decompositions of the two matrices above as a difference of positive semidefinite matrices. For the first matrix, the diagonals in the decomposition are bounded by $1$:
  \[ \left[
    \begin{array}{cccc}
      \bzero & \bzero & \bzero & \bone \\
      \bzero & \bzero & \bzero & \bzero \\
      \bzero & \bzero & \bzero & \bzero \\
      \bone & \bzero & \bzero & \bzero 
    \end{array}
  \right]  = \left[
    \begin{array}{cccc}
      \bone & \bzero & \bzero & \bone \\
      \bzero & \bzero & \bzero & \bzero \\
      \bzero & \bzero & \bzero & \bzero \\
      \bone & \bzero & \bzero & \bone 
    \end{array}
  \right]  - \left[
    \begin{array}{cccc}
      \bone & \bzero & \bzero & \bzero \\
      \bzero & \bzero & \bzero & \bzero \\
      \bzero & \bzero & \bzero & \bzero \\
      \bzero & \bzero & \bzero & \bone 
    \end{array}
  \right]. \]
  For the second matrix, the diagonals in the decomposition are bounded by $k$. 
  \[
  \left[
    \begin{array}{cccc}
      \bzero & \bzero & \bT_{k-1} & \bzero \\
      \bzero & \bzero & \bzero & \bT_{k-1} \\
      \bT_{k-1}^\top & \bzero & \bzero & \bzero \\
      \bzero & \bT_{k-1}^\top & \bzero & \bzero 
    \end{array}
  \right] = \left[
    \begin{array}{cccc}
      \bP^A & \bzero & \bP^B & \bzero \\
      \bzero & \bP^A & \bzero & \bP^B \\
      \bP^C & \bzero & \bP^D & \bzero \\
      \bzero & \bP^C & \bzero & \bP^D 
    \end{array}
  \right] - \left[
    \begin{array}{cccc}
      \bN^A & \bzero & \bN^B & \bzero \\
      \bzero & \bN^A & \bzero & \bN^B \\
      \bN^C & \bzero & \bN^D & \bzero \\
      \bzero & \bN^C & \bzero & \bN^D
    \end{array}
  \right].
  \]
  It is easy to verify that the matrices in the decomposition above are positive semidefinite, since each is a sum of two positive semidefinite matrices. For example:
  \[
	\left[
    \begin{array}{cccc}
      \bP^A & \bzero & \bP^B & \bzero \\
      \bzero & \bP^A & \bzero & \bP^B \\
      \bP^C & \bzero & \bP^D & \bzero \\
      \bzero & \bP^C & \bzero & \bP^D 
    \end{array}
  \right] = \left[
    \begin{array}{cccc}
      \bP^A & \bzero & \bP^B & \bzero \\
      \bzero & \bzero & \bzero & \bzero \\
      \bP^C & \bzero & \bP^D & \bzero \\
      \bzero & \bzero & \bzero & \bzero 
    \end{array}
  \right] + \left[
    \begin{array}{cccc}
      \bzero & \bzero & \bzero & \bzero \\
      \bzero & \bP^A & \bzero & \bP^B \\
      \bzero & \bzero & \bzero & \bzero  \\
      \bzero & \bP^C & \bzero & \bP^D      
    \end{array}
  \right].
  \]
  Adding the two decompositions, we get a decomposition for $\sym(\bT_k)$ as a difference of two positive semidefinite matrices. The diagonal entries of these two matrices are bounded by $k + 1$, as required.
\end{proof}

This decomposition is optimal up to constant factors. This is because the singular values of $\bT$ are $\frac{1}{2\cos (\frac{k\pi}{2n+1})}$ for $k = 1, 2, \ldots, n$ (see \citet{mathoverflow}). This implies that $\|\bT\|_\star = \Theta(n \log(n))$. Thus, the best $\beta$ one can get is $\Theta(\log(n))$, and the best $\tau$ is $\Theta(n \log(n))$.


\section{Lower bounds}
\label{sec:lower-bounds}

In this section we prove the lower bounds stated in \secref{sec:main}.

\subsection{Online Max Cut}

We prove \thmref{thm:max-cut-LB}, which
we restate here for convenience:
\begin{itemize}
\item[] \textbf{\thmref{thm:max-cut-LB} restated:}
For any algorithm for the online max cut problem, there is a
  sequence of entries $(i_t, j_t)$ and loss functions $\ell_t$ for $t
  = 1, 2, \ldots, T$ such that the regret of the algorithm is at least
  $\sqrt{{nT}/{16}}$.
\end{itemize}
\begin{proof}
  Consider the following stochastic adversary. Divide up the time
  period $T$ into $n/2$ equal size\footnote{We assume for convenience
    that $\frac{n}{2}$ and $\frac{2T}{n}$ are integers.} intervals
  $T_i$, for $i \in [n/2]$, corresponding to the $n/2$ pairs of
  indices $(i, i + n/2)$ for $i \in [n/2]$. For every $i \in [n/2]$
  and for each $t \in T_i$, the adversary sets $(i_t, j_t) = (i, i +
  n/2)$ and $y_t$ to be a Rademacher random variable independent of
  all other such variables. Clearly, the expected regret of any
  algorithm for the online max cut problem equals $\frac{T}{2}$.

  Now, define the following subset of vertices $A$: for every $i \in
  [n/2]$, consider $S_i = \sum_{t \in T_i} y_t$. If $S_i < 0$, include
  both $i, i+n/2 \in A$, else only include $i \in A$. By construction,
  the matrix $\bW_A$ has the following property for all $i \in [n/2]$:
  \[ W_A(i, i+n/2)\ =\ \sgn(S_i).\] 
Using the definition of $\ell_t$ and the fact that $|T_i| = 2T/n$, we obtain
\begin{align*} 
\E\left[\sum_{t \in T_i} \ell_t(W_A(i, i+n/2))\right]\ &=\ 
\E\left[\sum_{t \in T_i}
    \left(\thalf - \tfrac{\sgn(S_i)}{2} y_t\right)\right] \\
 &=\ \E\left[\frac{T}{n} - \frac{
      |S_i|}{2}\right]\ \leq\ \frac{T}{n} - \sqrt{\frac{T}{4n}},
\end{align*}
where we used Khintchine's inequality: if $X$ is a sum of $k$
independent Rademacher random variables, then $\E[|X|] \geq
\sqrt{k/2}$. Summing up over all $i \in [n/2]$, we get that
\[\E\left[\sum_{t=1}^T \ell_t(W_A(i_t, j_t))\right]\ \leq\ \frac{n}{2}\left[\frac{T}{n} - \sqrt{\frac{T}{4n}}\right]\ =\ \frac{T}{2} - \sqrt{\frac{nT}{16}}.\]
Hence the expected regret of the algorithm is at least
$\sqrt{\frac{nT}{16}}$. In particular, there is a setting of the
$\hat{y}_t$ variables so that the regret of the algorithm is at least
$\sqrt{\frac{nT}{16}}$.
\end{proof}

\subsection{Online Collaborative Filtering with Bounded Trace Norm}

We start with the proof of \thmref{thm:collaborative-filtering-LB}, which
we restate here for convenience:
\begin{itemize}
\item[] \textbf{\thmref{thm:collaborative-filtering-LB} restated:}
  For any algorithm for online collaborative filtering problem with trace
  norm bounded by $\tau$, there is a sequence of entries $(i_t, j_t)$
  and loss functions $\ell_t$ for $t = 1, 2, \ldots, T$ such that the
  regret of the algorithm is at least
  $G\sqrt{\frac{1}{2}\tau\sqrt{n}T}$.
\end{itemize}
\begin{proof}
First, we may assume that $\tau \leq m\sqrt{n}$: this is because for any matrix $\bW \in [-1, 1]^{m \times n}$,
we have 
\[ \|\bW\|_\star\ \leq\ \sqrt{\text{rank}(\bW)}\|\bW\|_F\ \leq\
\sqrt{m} \cdot \sqrt{mn} = m\sqrt{n},\] since $\text{rank}(\bW) \leq
m$. So now we focus on the sub-matrix formed by the first
$\frac{\tau}{\sqrt{n}}$ rows\footnote{For convenience, we assume that
  $\frac{\tau}{\sqrt{n}}$ and $\frac{T}{\tau\sqrt{n}}$ are integers.}
and all $n$ columns. This sub-matrix has $\tau\sqrt{n}$ entries.

Consider the following stochastic adversary. Divide up the time period
$T$ into $\tau\sqrt{n}$ intervals of length $\frac{T}{\tau\sqrt{n}}$,
indexed by $\tau\sqrt{n}$ pairs $(i, j)$ corresponding to the entries
of the sub-matrix. For every $(i, j)$, and for every round $t$ in the
interval $I_{ij}$ corresponding to $(i, j)$, we set the loss function
to be $\ell_t(\bW) = \sigma_tGW_{ij}$, where $\sigma_t \in \{-1, 1\}$
is a Rademacher random variable chosen independently of all other such
variables. Note that the absolute value of derivative of the loss
function is $G$.

Clearly, any algorithm for OCF has expected loss $0$. Now consider the
matrix $\bW^\star$ where
\[\forall i \in \left[\frac{\tau}{\sqrt{n}}\right], j \in [n]:\ W^\star_{ij}\ =\ -\sgn\left(\tsum_{t \in I_{ij}} \sigma_t\right),\]
and all entries in rows $i > \frac{\tau}{\sqrt{n}}$ are set to $0$.
Since $\text{rank}(\bW^\star) \leq \frac{\tau}{\sqrt{n}}$, we have
\[\|\bW^\star\|_\star\ \leq\ \sqrt{\text{rank}(\bW^\star)}\cdot
\|\bW^\star\|_F\ \leq\ \sqrt{\frac{\tau}{\sqrt{n}}} \cdot
\sqrt{\tau\sqrt{n}}\ =\ \tau,\] so $\bW^\star \in \mW$.\footnote{This
  construction is tight: e.g. if $\bW^\star$ is formed by taking
  $\frac{t}{\sqrt{n}}$ rows of an $n \times n$ Hadamard matrix.}

The expected loss of $\bW^\star$ is
\begin{align*}
	\sum_{ij}\av\left[\sum_{t \in I_{ij}} \sigma_t GW^\star_{ij}\right]\ &=\ 
G\sum_{ij}\av\left[-\left|\sum_{t \in I_{ij}} \sigma_t\right|\right]\\
&\geq\ -G\sum_{ij} \sqrt{\frac{1}{2}|I_{ij}|}\\
&=\ -G\tau\sqrt{n} \cdot \sqrt{\frac{T}{2\tau\sqrt{n}}}\\
&=\ -G\sqrt{\frac{1}{2}\tau\sqrt{n}T},
\end{align*}
where the inequality above is again due to Khintchine's
inequality. Hence, the expected regret of the algorithm is at least
$G\sqrt{\frac{1}{2}\tau\sqrt{n}T}$. In particular, there is a specific
assignment of values to $\sigma_t$ such that the regret of the
algorithm is at least $G\sqrt{\frac{1}{2}\tau\sqrt{n}T}$.
\end{proof}

The construction we used for deriving the above lower bound can be
easily adapted to derive a lower bound on the sample complexity of
learning the class $\mW$ in the batch setting. This is formalized in
\thmref{thm:collaborative-filtering-batch-LB}, which we restate here for
convenience.
\begin{itemize}
\item[] \textbf{\thmref{thm:collaborative-filtering-batch-LB} restated} The
  sample complexity of learning $\mW$ in the batch setting, is
  $\Omega(\tau \sqrt{n}/\epsilon^2)$. In particular, when $\tau = \Theta(n)$, the
  sample complexity is $\Omega(n^{1.5}/\epsilon^2)$.
\end{itemize}
\begin{proof}
  For simplicity, let us choose $m=n$. Let $k=\tau/\sqrt{n}$ and fix
  some small $\epsilon$.  Define
  a family of distributions over $[n]^2 \times \{-1, 1\}$ as
  follows. Each distribution is parameterized by a matrix $\bW$ such
  that there is some $I \subset [n]$, with $|I| = k$, where $W(i,j)
  \in \{-1, 1\}$ for $i \in I$ and $W(i,j)=0$ for $i \notin I$. Now,
  the probability to sample an example $(i,j,y)$ is
  $\left(\thalf+2\epsilon\right)\tfrac{1}{kn}$ if $i \in I$ and $y=W(i,j)$,
    is $\left(\thalf-2\epsilon\right)\tfrac{1}{kn}$ if $i \in I$ and
      $y=-W(i,j)$, and the probability is $0$ in all other cases.

  As in the proof of \thmref{thm:collaborative-filtering-LB}, any matrix
  defining such distribution is in $\mW$. Furthermore, if we consider
  the absolute loss function: $\ell(\bW,(i,j,y)) = \thalf
  |W(i,j)-y|$, then the expected loss of $\bW$ with respect to the
  distribution it defines is
\[
\E\left[  \thalf   |W(i,j)-y| \right] = \thalf-2\epsilon ~.
\]
In contrast, by standard no-free-lunch arguments, no algorithm can
know to predict an entry $(i,j)$ with error smaller than $\thalf -
\epsilon$ without observing $\Omega(1/\epsilon^2)$ examples from this
entry. Therefore, no algorithm can have an error smaller than
$\half-\epsilon$ without receiving $\Omega(kn/\epsilon^2)$ examples.
\end{proof}

\section{Implementation Details}
\label{sec:faster-alg}

In general, the update rule in Algorithm~\ref{mmw-alg} is a convex optimization problem and can be computed in polynomial time.
We now give the following more efficient implementation which takes essentially $\tilde{O}(p^3)$ time per round. This is based on the following theorem that is essentially proved in~\citet{TRW}:
\begin{Thm} \label{thm:dual}
  The optimal solution of $\arg\min_{\bX \in \K} \Delta(\bX, \bY)$, where $\bY$ is a given symmetric matrix, and 
  \[\K\ :=\ \{ \bX \in \sS^{n \times n}:\ \bA_j \bullet \bX \leq b_j \text{ for } j = 1, 2, \ldots, m\},\] is given by 
  \[ \bX^\star = \exp(\log(\bY) - \tsum_{j=1}^m \alpha_j^\star \bA'_j),\]
  where $\bA_j' = \frac{1}{2}(\bA_j + \bA_j^\top)$, and $\balpha^\star = \langle \alpha_1^\star, \alpha_2^\star, \ldots, \alpha_m^\star\rangle$ is given by
  \[ \balpha^\star\ =\ \arg\max_{\forall j \in [m]:\ \alpha_j \geq 0} -\Tr(\exp(\log(\bY) - \tsum_{j=1}^m \alpha_j \bA'_j)) - \tsum_{j=1}^m \alpha_j b_j.\]
\end{Thm}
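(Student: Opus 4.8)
The plan is to treat the projection as a strictly convex optimization problem over the positive definite cone and derive the stated formula by Lagrangian duality, following \citet{TRW}. Throughout we take $\bY \succ \bzero$ (as required for $\log\bY$ to be defined), and we note that since every $\bX \in \K$ is symmetric we have $\bA_j \bullet \bX = \bA_j' \bullet \bX$, so we may as well assume each $\bA_j = \bA_j'$ is symmetric from the outset.

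First I would record the relevant convex-analytic facts about $\Phi(\bX) := \Delta(\bX,\bY)$. On the open cone of positive definite matrices $\Phi$ is strictly convex and essentially smooth (a Legendre function), equals $+\infty$ off the PSD cone, and has gradient $\nabla\Phi(\bX) = \log\bX - \log\bY$ by standard matrix calculus. Consequently the projection problem $\min_{\bX\in\K}\Phi(\bX)$, whenever it is feasible, has a unique minimizer $\bX^\star$, and $\bX^\star$ lies in the interior of the PD cone because $\Phi$ is coercive and its gradient blows up towards the boundary.

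Next I would form the Lagrangian $L(\bX,\balpha) = \Phi(\bX) + \sum_{j=1}^m \alpha_j(\bA_j'\bullet\bX - b_j)$ with multipliers $\alpha_j \ge 0$ for the inequality constraints. For fixed $\balpha$ with $\alpha_j \ge 0$, the inner minimization over PD $\bX$ is solved by the first-order condition $\log\bX - \log\bY + \sum_j\alpha_j\bA_j' = \bzero$, i.e. $\bX(\balpha) = \exp(\log\bY - \sum_j\alpha_j\bA_j')$. Substituting $\bX(\balpha)$ back into $L$ and using $\log\bX(\balpha) - \log\bY = -\sum_j\alpha_j\bA_j'$, the term $\Tr(\bX(\balpha)\log\bX(\balpha) - \bX(\balpha)\log\bY)$ collapses to $-\sum_j\alpha_j(\bA_j'\bullet\bX(\balpha))$, which cancels the linear-in-$\bX$ constraint contribution, leaving the dual objective $g(\balpha) = \Tr(\bY) - \Tr(\exp(\log\bY - \sum_j\alpha_j\bA_j')) - \sum_j\alpha_j b_j$. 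Dropping the constant $\Tr(\bY)$, maximizing $g$ over $\alpha_j \ge 0$ is exactly the program defining $\balpha^\star$ in the statement.

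Finally, since all constraints are affine, the linearity constraint qualification yields strong duality with the dual optimum attained whenever the primal is feasible (no Slater point is required). Hence the KKT conditions hold at $(\bX^\star,\balpha^\star)$: stationarity is precisely $\bX^\star = \exp(\log\bY - \sum_j\alpha_j^\star\bA_j')$, while primal feasibility and complementary slackness certify that this matrix does lie in $\K$. I expect the one genuinely delicate point to be the bookkeeping around the implicit positive-definiteness constraint — namely, justifying that for every nonnegative $\balpha$ the Lagrangian's $\bX$-minimizer stays in the interior of the PD cone (so that setting the gradient to zero is legitimate) and likewise for $\bX^\star$. This is exactly where the essential smoothness and boundary-coercivity of the quantum relative entropy are used, and I would invoke those properties of Legendre functions rather than reprove them.
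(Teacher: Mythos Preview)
The paper does not actually prove this theorem; it merely states it and attributes the result to \citet{TRW}. Your proposal gives the standard Lagrangian-duality derivation --- symmetrizing the constraint matrices, minimizing the strictly convex Lagrangian in $\bX$ via the first-order condition $\log\bX = \log\bY - \sum_j\alpha_j\bA_j'$, substituting back to obtain the dual $g(\balpha) = \Tr(\bY) - \Tr(\exp(\log\bY - \sum_j\alpha_j\bA_j')) - \sum_j\alpha_j b_j$, and closing via strong duality for affine constraints --- and this computation is correct, so your write-up would serve as a self-contained proof where the paper has none.
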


The idea is to avoid taking projections on the set $\K$ in each round. If the chosen entry in round $t$ is $(i_t, j_t)$, then we compute $\bX_t$ as 
\[\bX_t\ =\ \arg\min_{\bX \in \K_t} \Delta(\bX, \exp(\log(\bX_{t-1} - \eta \bL_{t-1})),\] where the polytope $\K_t$ is defined as
\begin{align}
  \K_t\ :=\ \Biggl\{&~~\bX \in  \sS^{2p \times 2p}\ \text{ s.t. } \notag \\
& X(i_t, i_t) + X(j_t+q,j_t+q) + X(p+i_t,p+i_t) + X(p+j_t+q,p+j_t+q) \leq 4\beta \notag \\
& X(i_t, j_t+q) - X(p+i_t,p+j_t+q))\ \leq\ 1 \notag\\
& X(p+i_t,p+j_t+q)) - X(i_t, j_t+q)\ \leq\ 1 \notag\\
& \Tr(\bX)\ \leq\ \tau \notag
~~\Biggr\} \notag
\end{align}
The observation is that this suffices for the regret bound of Theorem~\ref{thm:oco-regret} to hold since the optimal point in hindsight $\bX^\star \in \K_t$ for all $t$ (see the proof of Theorem~\ref{thm:mmw-regret}). 

Note that $\K_t$ is defined using just $4$ constraints, and hence the dual problem given in Theorem~\ref{thm:dual} has only $4$ variables $\alpha_j$. Thus, standard convex optimization techniques (say, the ellipsoid method) can be used to solve the dual problem to $\epsilon$-precision in $O(\log(1/\epsilon))$ iterations, each of which requires computing the gradient and/or the Hessian of the objective, which can be done in $O(p^3)$ time via the eigendecomposition, leading to an $\tilde{O}(p^3)$ time algorithm overall.

More precisely, the iteration count for convex optimization methods have logarithmic dependence on the range of the $\alpha_j$ variables. Since $\Tr(\bX_{t-1}) \leq \tau$, we see (using the Golden-Thompson inequality~\citep{golden, thompson}) that 
\[\Tr(\exp(\log(\bX_{t-1} - \eta \bL_{t-1})))\ \leq\ \bX_{t-1} \bullet \exp(-\eta \bL_{t-1})\ \leq\ 3\tau.\]
Thus, setting all $\alpha_j = 0$, the dual objective value is at least $-3\tau$. Since $b_j \geq 1$ for all $j$, we get that the optimal values of $\alpha_j$ are all bounded by $3\tau$. Thus, the range of all $\alpha_j$ can be set to $[0, 3\tau]$, giving a $O(\log(\frac{\tau}{\epsilon}))$  bound on the number of iterations.


\section{Conclusions}

In recent years the FTRL (Follow The Regularized Leader) paradigm has become the method of choice for proving regret bounds for online learning problems. In several online learning problems a direct application of this paradigm has failed to give tight regret bounds due to suboptimal ``convexification'' of the problem. This unsatisfying situation occurred in mainstream applications, such as online collaborative filtering, but also in basic prediction settings such as the online max cut or online gambling settings.

In this paper we single out a common property of these unresolved problems: they involve {\it structured matrix} prediction, in the sense that the matrices involved have certain nice decompositions. We give a unified formulation for three of these structured matrix prediction problems which leads to near-optimal convexification. Applying the standard FTRL algorithm, Matrix Multiplicative Weights, now gives efficient and near optimal regret algorithms for these problems. In the process we resolve two COLT open problems. The main conclusion of this paper is that spectral analysis in matrix predictions tasks can be surprisingly powerful, even when the connection between the spectrum and the problem may not be obvious on first sight (such as in the online gambling problem). 

We leave open the question of  bridging the logarithmic gap between known upper and lower bounds for regret in these structured prediction problems. Note that since all the three decompositions in this paper are optimal up to constant factors, one cannot close the gap by improving the decomposition; some fundamentally different algorithm seems necessary. 
It would also be interesting to see more applications of the $(\beta, \tau)$-decomposition for other online matrix prediction problems.

\bibliographystyle{plainnat}
\bibliography{bib} 

\begin{thebibliography}{13}
\providecommand{\natexlab}[1]{#1}
\providecommand{\url}[1]{\texttt{#1}}
\expandafter\ifx\csname urlstyle\endcsname\relax
  \providecommand{\doi}[1]{doi: #1}\else
  \providecommand{\doi}{doi: \begingroup \urlstyle{rm}\Url}\fi

\bibitem[Abernethy(2010)]{Jake}
J.~Abernethy.
\newblock Can we learn to gamble efficiently?
\newblock In \emph{COLT}, 2010.
\newblock Open Problem.

\bibitem[Arora and Kale(2007)]{AroraKale}
S.~Arora and S.~Kale.
\newblock A combinatorial, primal-dual approach to semidefinite programs.
\newblock In \emph{STOC}, pages 227--236, 2007.

\bibitem[Cesa-Bianchi and Shamir(2011)]{CesaBianchiShamir2011}
N.~Cesa-Bianchi and O.~Shamir.
\newblock Efficient online learning via randomized rounding.
\newblock In \emph{25th Annual Conference on Neural Information Processing
  Systems (NIPS)}, 2011.

\bibitem[Elkies(2011)]{mathoverflow}
N.~D. Elkies.
\newblock 2-norm of the upper triangular ``all-ones'' matrix.
\newblock
  http://mathoverflow.net/questions/72361/2-norm-of-the-upper-triangular-all-ones-matrix,
  2011.

\bibitem[{Golden}(1965)]{golden}
S.~{Golden}.
\newblock {Lower Bounds for the Helmholtz Function}.
\newblock \emph{Physical Review}, 137:\penalty0 1127--1128, February 1965.
\newblock \doi{10.1103/PhysRev.137.B1127}.

\bibitem[Kakade et~al.(2010)Kakade, Shalev-Shwartz, and
  Tewari]{kakade2009duality}
S.~Kakade, S.~Shalev-Shwartz, and A.~Tewari.
\newblock Regularization techniques for learning with matrices.
\newblock \emph{preprint arXiv:0910.0610}, 2010.

\bibitem[Kanade and Steinke(2012)]{kanade}
V.~Kanade and T.~Steinke.
\newblock Learning hurdles for sleeping experts.
\newblock In \emph{Innovations in Theoretical Computer Science}, 2012.

\bibitem[Kleinberg et~al.(2010)Kleinberg, Niculescu-Mizil, and Sharma]{KNW}
R.~Kleinberg, A.~Niculescu-Mizil, and Y.~Sharma.
\newblock Regret bounds for sleeping experts and bandits.
\newblock \emph{Machine learning}, 80\penalty0 (2):\penalty0 245--272, 2010.

\bibitem[Lee et~al.(2010)Lee, Recht, Salakhutdinov, Srebro, and Tropp]{LRSST}
J.~Lee, B.~Recht, R.~Salakhutdinov, N.~Srebro, and J.~A. Tropp.
\newblock Practical large-scale optimization for max-norm regularization.
\newblock In \emph{NIPS}, pages 1297--1305, 2010.

\bibitem[Shamir and Shalev-Shwartz(2011)]{ShamirSh11}
O.~Shamir and S.~Shalev-Shwartz.
\newblock Collaborative filtering with the trace norm: Learning, bounding, and
  transducing.
\newblock In \emph{24th Annual Conference on Learning Theory (COLT)}, 2011.

\bibitem[Shamir and Srebro(2011)]{ShamirSr11}
O.~Shamir and N.~Srebro.
\newblock Sample complexity of trace-norm?
\newblock In \emph{COLT}, 2011.
\newblock Open Problem.

\bibitem[Thompson(1965)]{thompson}
C.~J. Thompson.
\newblock Inequality with applications in statistical mechanics.
\newblock \emph{Journal of Mathematical Physics}, 6\penalty0 (11):\penalty0
  1812--1823, 1965.

\bibitem[Tsuda et~al.(2006)Tsuda, Ratsch, and Warmuth]{TRW}
K.~Tsuda, G.~Ratsch, and M.K. Warmuth.
\newblock Matrix exponentiated gradient updates for on-line learning and
  bregman projection.
\newblock \emph{Journal of Machine Learning Research}, 6\penalty0 (1):\penalty0
  995, 2006.

\end{thebibliography}

\appendix

\section{Matrix Multiplicative Weights Algorithm}
\label{sec:mmw-analysis}

For the sake of completeness, we prove Theorem~\ref{thm:mmw-regret}. The setting is as follows. We have an online convex optimization problem where the decision set is a convex subset $\K$ of $N \times N$ positive semidefinite matrices of trace bounded by $\tau$, viz. for all $\bX \in \K$, we have $\bX \succeq \bzero$ and $\Tr(\bX) \leq \tau$. We assume for convenience that $\frac{\tau}{N}\eye \in \K$. In each round $t$, the learner produces a matrix $\bX_t \in \K$, and the adversary supplies a loss matrix $\bL_t \in \mathbb{R}^{N \times N}$, which is assumed to be symmetric. The loss of the learner is $\bX_t \bullet \bL_t$. The goal is to minimize regret defined as
\[ \text{Regret}\ :=\ \sum_{t=1}^T \bX_t \bullet \bL_t - \min_{\bX \in \K} \sum_{t=1}^T \bX \bullet \bL_t. \]
Consider Algorithm~\ref{mmw-alg}. We now prove Theorem~\ref{thm:mmw-regret}, which we restate here for convenience:
\begin{Thm}
	Suppose $\eta$ is chosen so that $\eta\|\bL_t\| \leq 1$ for all $t$. Then
	\[ \text{Regret}\ \leq\ \eta \sum_{t=1}^T \bX_t \bullet \bL_t^2 + \frac{\tau\log(N)}{\eta}. \]
\end{Thm}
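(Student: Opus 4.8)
The plan is to run the standard mirror-descent analysis with the quantum relative entropy $\Delta$ as the potential. Introduce the ``unprojected'' iterate $\bY_{t+1} := \exp(\log(\bX_t) - \eta\bL_t)$, so that the algorithm's update is exactly the Bregman projection $\bX_{t+1} = \arg\min_{\bX\in\K}\Delta(\bX,\bY_{t+1})$. Fix any competitor $\bX\in\K$. The first step is an exact identity: since $\log\bY_{t+1} - \log\bX_t = -\eta\bL_t$, expanding the definition of $\Delta$ and cancelling the $\Tr(\bX\log\bX)$ and $-\Tr(\bX)$ terms gives
\[ \Delta(\bX,\bX_t) - \Delta(\bX,\bY_{t+1}) = -\eta\,\bX\bullet\bL_t + \Tr(\bX_t) - \Tr(\bY_{t+1}). \]

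The second step controls $\Tr(\bX_t) - \Tr(\bY_{t+1})$ from below. Here I would invoke the Golden--Thompson inequality $\Tr(\exp(\bA+\bB)) \le \Tr(\exp(\bA)\exp(\bB))$ with $\bA = \log\bX_t$ and $\bB = -\eta\bL_t$ to get $\Tr(\bY_{t+1}) \le \Tr(\bX_t\exp(-\eta\bL_t))$; then, since the eigenvalues of $-\eta\bL_t$ lie in $[-1,1]$ by the assumption $\eta\|\bL_t\|\le 1$, the scalar bound $e^{-x}\le 1-x+x^2$ on $[-1,1]$ lifts via the spectral theorem to the operator inequality $\exp(-\eta\bL_t) \preceq \bI - \eta\bL_t + \eta^2\bL_t^2$; taking the inner product with $\bX_t\succeq\bzero$ yields $\Tr(\bY_{t+1}) \le \Tr(\bX_t) - \eta\,\bX_t\bullet\bL_t + \eta^2\,\bX_t\bullet\bL_t^2$. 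Substituting into the identity and rearranging,
\[ \eta\,\bX_t\bullet\bL_t - \eta\,\bX\bullet\bL_t \le \Delta(\bX,\bX_t) - \Delta(\bX,\bY_{t+1}) + \eta^2\,\bX_t\bullet\bL_t^2. \]

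The third step replaces $\bY_{t+1}$ by $\bX_{t+1}$ using the generalized Pythagorean inequality for Bregman projections: since $\bX_{t+1}$ is the $\Delta$-projection of $\bY_{t+1}$ onto the convex set $\K$ and $\bX\in\K$, first-order optimality gives $\Delta(\bX,\bY_{t+1}) \ge \Delta(\bX,\bX_{t+1}) + \Delta(\bX_{t+1},\bY_{t+1}) \ge \Delta(\bX,\bX_{t+1})$. Summing over $t=1,\dots,T$ telescopes the divergence terms; dropping the nonnegative residual $\Delta(\bX,\bX_{T+1})$ (nonnegativity of the unnormalized quantum relative entropy) leaves
\[ \eta\sum_{t=1}^T(\bX_t\bullet\bL_t - \bX\bullet\bL_t) \le \Delta(\bX,\bX_1) + \eta^2\sum_{t=1}^T\bX_t\bullet\bL_t^2. \]
Finally I would bound $\Delta(\bX,\bX_1)$ with $\bX_1 = \frac{\tau}{N}\eye$: writing $s=\Tr(\bX)\le\tau$ and diagonalizing $\bX$, the term $\Tr(\bX\log\bX) = \sum_i\mu_i\log\mu_i$ is at most $s\log s$ (writing $\mu_i = sp_i$ and using that the entropy term is nonpositive), and after cancellation one gets $\Delta(\bX,\bX_1) \le s\log(s/\tau) + s\log N + (\tau-s) \le \tau\log N$, where the last step uses $s\le\tau$ and $\log N\ge 1$ (in the application $N=2p\ge 4$). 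Dividing by $\eta$ and taking $\bX$ to be the minimizer of $\sum_t\bX\bullet\bL_t$ over $\K$ yields the claimed regret bound.

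The main obstacle is the second step: because $\bX_t$ and $\bL_t$ need not commute, the loss cannot be linearized by naive scalar manipulations, so one genuinely needs Golden--Thompson together with the operator form of $e^{-x}\le 1-x+x^2$, and some care is required to get the direction of the operator inequality and the sign of the $\Tr(\bX_t\,\cdot\,)$ pairing right. The Pythagorean step and the $\Delta(\bX,\bX_1)$ estimate are routine once the corresponding lemmas (first-order optimality of the projection, nonnegativity of the unnormalized quantum relative entropy) are in hand.
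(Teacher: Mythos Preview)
Your proposal is correct and follows essentially the same route as the paper's proof: both use the quantum relative entropy as the potential, derive the same identity for $\Delta(\bX,\bX_t)-\Delta(\bX,\bY_{t+1})$, apply Golden--Thompson together with the operator inequality $\exp(\bA)\preceq\bI+\bA+\bA^2$ for $\|\bA\|\le 1$, invoke the generalized Pythagorean inequality for the projection, telescope, and finish by bounding $\Delta(\bX,\bX_1)\le\tau\log N$. The only cosmetic difference is that the paper bounds $\Delta(\bX,\bX_1)$ via $\bX\bullet\log(\tfrac{1}{\tau}\bX)\le 0$ (using $\mu_i\le\tau$) rather than via $\sum_i\mu_i\log\mu_i\le s\log s$, but the two arguments are equivalent and both implicitly use $\log N\ge 1$.
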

\begin{proof}
	Consider any round $t$.
	Let $\bX \in \K$ be any matrix.	We use the quantum relative entropy, $\Delta(\bX, \bX_t)$, as a potential function. We have
	\begin{equation}
		\label{eq:potential-change}
		\Delta(\bX, \exp(\log(\bX_t) - \eta \bL_t)) - \Delta(\bX, \bX_t)\ =\ \eta \bX \bullet \bL_t - \Tr(\bX_t) + \Tr(\exp(\log(\bX_t) - \eta \bL_t)).
	\end{equation}
	Now quantum relative entropy projection onto the set $\K$ is a Bregman projection, and hence the Generalized Pythagorean inequality applies (see \citet{TRW}):
	\[ \Delta(\bX, \bX_{t+1}) + \Delta(\bX_{t+1}, \exp(\log(\bX_t) - \eta \bL_t)))\ \leq\ \Delta(\bX, \exp(\log(\bX_t) - \eta \bL_t))), \]
	and since $\Delta(\bX_{t+1}, \exp(\log(\bX_t) - \eta \bL_t))) \geq 0$, we get that
	\[ \Delta(\bX, \bX_{t+1})\ \leq\ \Delta(\bX, \exp(\log(\bX_t) - \eta \bL_t))).\]
	Hence from (\ref{eq:potential-change}) we get
	\begin{equation}
		\label{eq:potential-X_t+1}
		\Delta(\bX, \bX_{t+1}) - \Delta(\bX, \bX_t)\ \leq\ \eta \bX \bullet \bL_t - \Tr(\bX_t) + \Tr(\exp(\log(\bX_t) - \eta \bL_t)).
	\end{equation}
	Now, using the Golden-Thompson inequality~\citep{golden, thompson}, we have
	\begin{align*}
		\Tr(\exp(\log(\bX_t) - \eta \bL_t))\ &\leq\ \Tr(\bX_t
                \exp(-\eta \bL_t))
              \end{align*}
Next, using the fact that $\exp(\bA) \preceq \eye + \bA + \bA^2$ for
$\|\bA\| \leq 1$,\footnote{To see this, note that we can write $\bA =
  \bV \bD \bV^\top$ for some orthonormal $\bV$ and diagonal
  $\bD$. Therefore,
\[
\eye+\bA+\bA^2-e^{\bA} = \bV\left(  \eye + \bD + \bD^2 -
  e^{\bD}\right) V^\top ~.
\] Now, by the inequality $1+a+a^2-e^a \ge 0$, which holds for all $a
\le 1$, we obtain that all elements of the diagonal matrix $\left(  \eye + \bD + \bD^2 -
  e^{\bD}\right)$ are non-negative. 
}  we obtain
\begin{align*}
\Tr(\bX_t  \exp(-\eta \bL_t))&\leq\ \Tr(X_t(\eye - \eta \bL_t + \eta^2 \bL_t^2)\\
		&=\ \Tr(\bX_t) - \eta \bX_t \bullet \bL_t + \eta^2\bX_t \bullet \bL_t^2.
	\end{align*}
Combining the above and plugging into (\ref{eq:potential-X_t+1}) we get
	\begin{equation}
		\label{eq:potential-change-final}
		\Delta(\bX, \bX_{t+1}) - \Delta(\bX, \bX_t)\ \leq\ \eta \bX \bullet \bL_t - \eta \bX_t \bullet \bL_t + \eta^2\bX_t \bullet \bL_t^2.
	\end{equation}
	Summing up from $t = 1$ to $T$, and rearranging, we get
	\begin{align*}
		\text{Regret}\ &\leq\ \eta \sum_{t=1}^T \bX_t \bullet \bL_t^2 + \frac{\Delta(\bX, \bX_1) - \Delta(\bX, \bX_{T+1})}{\eta}\\
		&\leq\ \eta \sum_{t=1}^T \bX_t \bullet \bL_t^2 + \frac{\tau\log(N)}{\eta},
	\end{align*}
	since $\Delta(\bX, \bX_{T+1}) \geq 0$ and
  \begin{align*}
    \Delta(\bX, \bX_1)\ &=\ \bX \bullet (\log(\bX) - \log(\tfrac{\tau}{N}\eye)) - \Tr(\bX) + \tau\\
    &=\ \bX \bullet \log(\tfrac{1}{\tau}\bX) + \log(\tau)\Tr(\bX) - \log(\tfrac{\tau}{N})\Tr(\bX) - \Tr(\bX) + \tau\\
    &\leq\ \Tr(\bX)(\log(N) - 1) + \tau\\
    &\leq\ \tau \log(N).
  \end{align*}
  The first inequality above follows because $\Tr(\bX) \leq \tau$, so $\log(\tfrac{1}{\tau}\bX) \prec \bzero$. The second inequality uses $\Tr(\bX) \leq \tau$.

\end{proof}

\section{Technical Lemmas and Proofs}
\label{sec:technical}

\begin{Lem} \label{lem:sym}
	For $m \times n$ non-symmetric matrices $\bW$, if $\bW = \bU\bSigma \bV^\top$ is the singular value decomposition of $\bW$, then 
	\[\sym(\bW) = \left[ \begin{array}{cc}
\frac{1}{\sqrt{2}}\bU & \frac{1}{\sqrt{2}}\bU \\
\frac{1}{\sqrt{2}}\bV & -\frac{1}{\sqrt{2}}\bV  \end{array} \right]\left[ \begin{array}{cc}
\bSigma & \bzero \\
\bzero & -\bSigma  \end{array} \right] \left[ \begin{array}{cc}
\frac{1}{\sqrt{2}}\bU^\top & \frac{1}{\sqrt{2}}\bV^\top \\
\frac{1}{\sqrt{2}}\bU^\top & -\frac{1}{\sqrt{2}}\bV^\top  \end{array} \right]\]
is the eigenvalue decomposition of $\sym(\bW)$. In particular, 
$\|\sym(\bW)\|_\star = 2\|\bW\|_\star$.	
\end{Lem}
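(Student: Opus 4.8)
The plan is to verify the displayed factorization by direct block‑matrix arithmetic and then read off the trace norm from the resulting spectrum. Write the compact SVD $\bW = \bU\bSigma\bV^\top$ with $r = \text{rank}(\bW)$, so that $\bU \in \reals^{m\times r}$ and $\bV \in \reals^{n\times r}$ have orthonormal columns ($\bU^\top\bU = \bV^\top\bV = \bI_r$) and $\bSigma = \diag(\sigma_1,\dots,\sigma_r)$ with all $\sigma_i > 0$. Denote by $\bQ$ the $(m+n)\times 2r$ matrix appearing on the left of the claimed identity and by $\bD = \diag(\bSigma,-\bSigma)$ the $2r\times 2r$ block‑diagonal matrix in the middle.

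First I would check that $\bQ$ has orthonormal columns, i.e. $\bQ^\top\bQ = \bI_{2r}$: multiplying out the blocks, the two diagonal blocks of $\bQ^\top\bQ$ equal $\tfrac12(\bU^\top\bU + \bV^\top\bV) = \bI_r$ and the two off‑diagonal blocks equal $\tfrac12(\bU^\top\bU - \bV^\top\bV) = \bzero$. Next I would compute $\bQ\bD\bQ^\top$ block by block: the $(1,1)$ and $(2,2)$ blocks come out to $\tfrac12\bU\bSigma\bU^\top - \tfrac12\bU\bSigma\bU^\top = \bzero$ and $\tfrac12\bV\bSigma\bV^\top - \tfrac12\bV\bSigma\bV^\top = \bzero$, while the $(1,2)$ block is $\tfrac12\bU\bSigma\bV^\top + \tfrac12\bU\bSigma\bV^\top = \bU\bSigma\bV^\top = \bW$ and the $(2,1)$ block is $\bV\bSigma\bU^\top = \bW^\top$. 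Hence $\bQ\bD\bQ^\top = \sym(\bW)$, which is exactly the asserted identity. Since $\bQ$ has orthonormal columns and $\bD$ is diagonal, the columns of $\bQ$ are orthonormal eigenvectors of $\sym(\bW)$ with eigenvalues $\sigma_1,\dots,\sigma_r,-\sigma_1,\dots,-\sigma_r$; the remaining $m+n-2r$ eigenvalues of $\sym(\bW)$ are $0$ (adjoining an orthonormal basis of $\ker(\sym(\bW))$ to $\bQ$ turns this into a full orthogonal diagonalization, but that is not needed here).

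Finally, the trace norm of a symmetric matrix equals the sum of the absolute values of its eigenvalues, so $\|\sym(\bW)\|_\star = \sum_{i=1}^r \bigl(|\sigma_i| + |-\sigma_i|\bigr) = 2\sum_{i=1}^r \sigma_i = 2\|\bW\|_\star$. The only point requiring care is the SVD bookkeeping: the block matrix $\bQ$ is well‑formed only when $\bU$ and $\bV$ carry the same number of columns, so one must use the compact (not full) SVD, and when $m\neq n$ the matrix $\bQ$ is rectangular, so the factorization is an eigendecomposition only after the zero eigenspace is adjoined. Beyond that there is no substantive obstacle; everything reduces to routine block multiplication.
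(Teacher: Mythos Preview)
Your proof is correct and follows essentially the same approach as the paper: both verify by direct block multiplication that $\bQ\bD\bQ^\top=\sym(\bW)$ and that the columns of $\bQ$ are orthonormal, then read off the trace norm from the resulting spectrum. Your treatment is in fact a bit more careful than the paper's about the compact‑SVD bookkeeping (that $\bU$ and $\bV$ must carry the same number of columns and that $\bQ$ is rectangular with the zero eigenspace adjoined to obtain a full diagonalization), but this is a refinement rather than a different method.
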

\begin{proof}
By the block matrix multiplication rule we have
\begin{align*}
& \left[ \begin{array}{cc}
\frac{1}{\sqrt{2}}\bU & \frac{1}{\sqrt{2}}\bU \\
\frac{1}{\sqrt{2}}\bV & -\frac{1}{\sqrt{2}}\bV  \end{array} \right]\left[ \begin{array}{cc}
\bSigma & \bzero \\
\bzero & -\bSigma  \end{array} \right] \left[ \begin{array}{cc}
\frac{1}{\sqrt{2}}\bU^\top & \frac{1}{\sqrt{2}}\bV^\top \\
\frac{1}{\sqrt{2}}\bU^\top & -\frac{1}{\sqrt{2}}\bV^\top  \end{array}
\right] \\
&= 
\left[ \begin{array}{cc}
\frac{1}{\sqrt{2}}\bU \bSigma & -\frac{1}{\sqrt{2}}\bU \bSigma\\
\frac{1}{\sqrt{2}}\bV \bSigma & \frac{1}{\sqrt{2}}\bV
\bSigma \end{array} \right]  \left[ \begin{array}{cc}
\frac{1}{\sqrt{2}}\bU^\top & \frac{1}{\sqrt{2}}\bV^\top \\
\frac{1}{\sqrt{2}}\bU^\top & -\frac{1}{\sqrt{2}}\bV^\top  \end{array}
\right] \\
&=\left[ \begin{array}{cc}
\bzero & \bU \bSigma \bV^\top \\
\bV \bSigma \bU^\top & \bzero\end{array} \right] \\
&= 
\left[ \begin{array}{cc}
\bzero & \bW \\
\bW^\top & \bzero  \end{array} \right].
\end{align*}
In addition, it is easy to check that the columns of $\left[ \begin{array}{cc}
\frac{1}{\sqrt{2}}\bU & \frac{1}{\sqrt{2}}\bU \\
\frac{1}{\sqrt{2}}\bV & -\frac{1}{\sqrt{2}}\bV  \end{array} \right]$ are orthonormal.
It follows that the above form is the eigendecomposition of
$\sym(\bW)$. Therefore, for any Schatten norm: $\|\sym(\bW)\| = 2\|\bSigma\| =
2 \|\bW\|$, which concludes our proof. 
\end{proof}

\section{The optimal cut in the Online Max Cut problem}
\label{sec:max-cut-proof}

We prove Lemma \ref{lem:maxcutoffline}, which we restate here for
convenience.
\begin{itemize}
\item[] \textbf{\lemref{lem:maxcutoffline} restated}  Consider an online sequence of loss functions $\{\ell_t = \frac{1}{2} |y_t - \hat{y}_y| \}$. Let
$$ \bW^* = \arg \min_{\bW \in \mW}  \sum_t \ell_t(W(i_t,j_t)) ~. $$
Then $\bW^* = \bW_A$ for the set $A$ that determines the max cut in
the weighted graph over $[n]$ nodes whose weights are given by $w_{ij}
= \sum_{t : (i_t,j_t) = (i,j)} y_t$ for every $(i,j)$.
\end{itemize}
\begin{proof}
Consider $\bW_A$. 
For each pair $(i, j)$ let $c_{ij}^+,c_{ij}^-$ be the total number of iterations in which the pair $(i,j)$ appeared in the adversarial sequence with $y_t = 1$ or $y_t = -1$ respectively. Since $\hat{y}_t \in [-1,1]$ we can rewrite the total loss as:
\begin{eqnarray*} 
\sum_t \ell_t(\bW_A(i_t,j_t)) & = & \frac{1}{2} \sum_{(i,j)} [ c_{ij}^+ \cdot (1 - W_A(i,j)) + c_{ij}^- \cdot (1 + W_A(i,j)) ] \\
& = & \frac{1}{2} \sum_{(i,j)} W_A(i,j) \cdot (c_{ij}^- - c_{ij}^+) + C_T \\
& = & - \frac{1}{2} \sum_{ (i,j)} W_A(i,j) \cdot w_{ij}  + C_T 
\end{eqnarray*}
Where $C_T$ is a constant which is independent of $\bW_A$. Hence, minimizing the above expression is equivalent to maximizing the expression:
\[\sum_{(i,j)} W_A(i,j) \cdot w_{ij}\ =\ \mathop{2\cdot\sum}_{(i, j):\ W_A(i, j)=1} w_{ij} - \sum_{(i, j)} w_{ij}.\]
Since $\sum_{(i, j)} w_{ij}$ is a constant independent of $A$, the cut which maximizes this expression is the maximum cut in the weighted graph over the weights $w_{ij}$.  
\end{proof}

\section{Optimality of Decomposition for Collaborative Filtering}
\label{sec:cf-optimality}

In this section, we prove the following theorem:
\begin{Thm} \label{thm:ocf-decomp-optimal}
  Consider the matrix $\bW$ formed by taking $m = \frac{\tau}{\sqrt{n}}$ rows of an $n \times n$ Hadamard matrix. This matrix has $\|\bW\|_\star = \tau$, and any $(\beta, \tilde{\tau})$-decomposition for $\sym(\bW)$ has 
  \[\beta \tilde{\tau}\ \geq\ \frac{1}{4}\tau \sqrt{n}.\]
\end{Thm}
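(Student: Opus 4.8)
The plan is to convert a $(\beta,\tilde{\tau})$-decomposition of $\bY:=\sym(\bW)$ into a single positive semidefinite matrix that ``spectrally sandwiches'' $\bY$, and then to exploit orthogonality of the Hadamard rows to force the product $\beta\tilde{\tau}$ to be large. Write $p=m+n$ and suppose $\bY=\bP-\bN$ is a $(\beta,\tilde{\tau})$-decomposition; set $\bM:=\bP+\bN$. Then $\bM-\bY=2\bN\succeq\bzero$ and $\bM+\bY=2\bP\succeq\bzero$; since $\bY$ has zero diagonal, $M_{ii}=P_{ii}+N_{ii}\le 2\beta$ for all $i$; and $\Tr(\bM)=\Tr(\bP)+\Tr(\bN)\le\tilde{\tau}$. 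Partition $\bM$ into blocks conforming to $\bY$, with top-left $m\times m$ block $\bM^{11}$ and bottom-right $n\times n$ block $\bM^{22}$ (both PSD, being principal submatrices of $\bM$). I also record at the start that, since the rows $h_1,\dots,h_m\in\{\pm1\}^n$ of $\bW$ are mutually orthogonal with $\|h_i\|^2=n$, every singular value of $\bW$ equals $\sqrt{n}$, so $\|\bW\|_\star=m\sqrt{n}=\tau$ (the first claim of the theorem) and $\|\bW^\top\bW\|=n$.

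The crux, and the step I expect to be the main obstacle, is to squeeze out of ``$\bM\pm\bY\succeq\bzero$'' a clean inequality not involving the cross-block $\bM^{12}$. For any $\ba,\bb\in\R^{p}$, the quantity $(\ba+\bb)^\top(\bM-\bY)(\ba+\bb)+(\ba-\bb)^\top(\bM+\bY)(\ba-\bb)$ is nonnegative and expands --- the $\bM$-cross terms and the $\bY$-square terms cancel --- to $2\ba^\top\bM\ba+2\bb^\top\bM\bb-4\,\ba^\top\bY\bb\ge 0$, hence $\ba^\top\bM\ba+\bb^\top\bM\bb\ge 2\,\ba^\top\bY\bb$ and, flipping the sign of $\bb$, $\ge 2\,|\ba^\top\bY\bb|$. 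Replacing $\ba\mapsto s\ba$, $\bb\mapsto s^{-1}\bb$ and optimizing over $s>0$ (the inequality itself forces the relevant quadratic forms to be strictly positive whenever $\ba^\top\bY\bb\ne 0$) upgrades this to $(\ba^\top\bM\ba)(\bb^\top\bM\bb)\ge(\ba^\top\bY\bb)^2$ for all $\ba,\bb$. Specializing to vectors supported on complementary blocks, $\ba=(\bu_1;\bzero)$ with $\bu_1\in\R^m$ and $\bb=(\bzero;\bv_2)$ with $\bv_2\in\R^n$, makes $\bM^{12}$ vanish and yields $(\bu_1^\top\bM^{11}\bu_1)(\bv_2^\top\bM^{22}\bv_2)\ge(\bu_1^\top\bW\bv_2)^2$.

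Now I would plug in the Hadamard structure. Orthogonality of the rows gives $\bW h_i=n\,\be_i$, so choosing $\bu_1=\be_i$ and $\bv_2=h_i$ makes the right-hand side $(\be_i^\top\bW h_i)^2=n^2$, whence $M^{11}_{ii}\cdot(h_i^\top\bM^{22}h_i)\ge n^2$; since $M^{11}_{ii}=M_{ii}\le 2\beta$, this gives $h_i^\top\bM^{22}h_i\ge n^2/(2\beta)$ for each $i\in[m]$. Summing over $i$ and using $\sum_i h_ih_i^\top=\bW^\top\bW$ together with $\sum_i h_i^\top\bM^{22}h_i=\Tr(\bW^\top\bW\,\bM^{22})\le\|\bW^\top\bW\|\,\Tr(\bM^{22})\le n\,\Tr(\bM)\le n\tilde{\tau}$ (the first inequality because $\bM^{22}\succeq\bzero$ and $\bW^\top\bW\preceq\|\bW^\top\bW\|\,\bI$, the rest because $\bM^{11}\succeq\bzero$), I get $n\tilde{\tau}\ge mn^2/(2\beta)$, i.e. $\beta\tilde{\tau}\ge\tfrac12 mn=\tfrac12\tau\sqrt{n}$ --- in fact slightly stronger than the claimed $\tfrac14\tau\sqrt{n}$. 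The only technical care needed is in the rescaling argument (checking $M^{11}_{ii}>0$, which is forced) and in the trace inequality, both routine.
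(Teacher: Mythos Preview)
Your proof is correct and takes a genuinely different route from the paper. The paper constructs an explicit dual witness: it writes down the eigendecomposition of $\sym(\bW)$ via the SVD, builds a test matrix $\bY' = \bD\bU\bU^\top\bD$ from a diagonal scaling $\bD$ (with weights $\tfrac{1}{\sqrt{2m}}$ on the first $m$ coordinates and $\tfrac{\sqrt{mn}}{2\sqrt{2}\tilde{\tau}}$ on the last $n$), and then computes $\bY' \bullet \sym(\bW)$ in two ways --- once via the spectrum and once via $\bY'\bullet(\bP-\bN)\le \bD^2\bullet\bP$ --- to extract the inequality. Your approach instead aggregates $\bM=\bP+\bN$, derives the Cauchy--Schwarz--type inequality $(\ba^\top\bM\ba)(\bb^\top\bM\bb)\ge(\ba^\top\bY\bb)^2$ directly from $\bM\pm\bY\succeq\bzero$, and then tests it against each Hadamard row and sums. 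Your argument is more elementary (no explicit eigendecomposition, no tuned scaling matrix) and, as you observe, yields the sharper constant $\tfrac12$ in place of the paper's $\tfrac14$; the paper's approach has the compensating virtue of making the dual certificate explicit.
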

\begin{proof}
  Since the rows of $\bW$ are orthogonal to each other, the $m$ singular values of $\bW$ all equal $\sqrt{n}$, and thus $\|\bW\|_\star = m\sqrt{n} = \tau$. Further, the SVD of $\bW$ is (here, $\eye_m$ is the $m \times m$ identity matrix):
  \[ \bW\ =\ \bI_m (\sqrt{n} \eye_m) (\tfrac{1}{\sqrt{n}} \bW). \] 
  Using Lemma~\ref{lem:sym} the eigendecomposition of $\sym(\bW)$ can be written as
  \[ \sym(\bW)\ =\ \bU (\sqrt{n}\eye_m) \bU^\top + \bV (-\sqrt{n}\eye_m) \bV^\top, \]
  where
  \[ \bU = [\tfrac{1}{\sqrt{2}}\eye_m, \tfrac{1}{\sqrt{2n}}\bW]^\top \text{ and } \bV = [\tfrac{1}{\sqrt{2}}\eye_m, -\tfrac{1}{\sqrt{2n}}\bW]^\top\]
  are $p \times m$ matrices with orthonormal columns.

  Let $\sym(\bW) = \bP - \bN$ be a $(\beta, \tilde{\tau})$-decomposition. Now consider the following matrices: first, define the $p \times p$ diagonal matrix
  \[ \bD\ :=\ 
\left[ \begin{array}{cc}
\frac{1}{\sqrt{2m}}\eye_m & \bzero \\
\bzero & \frac{\sqrt{mn}}{2\sqrt{2}\tilde{\tau}}\eye_n  \end{array} \right].\]
  Finally, define the $p \times p$ positive semidefinite matrix
  \[ \bY\ :=\ \bD \bU\bU^\top \bD. \]
  Since $\bU$ has orthonormal columns we have $\bU\bU^\top \preceq \eye_p$, and so
  \[ \bY\ \preceq\ \bD \bI_p \bD\ =\ \bD^2.\]
  Now, consider 
  \begin{align*}
    \bY \bullet \sym(\bW)\ &=\ \bY \bullet (\bP - \bN)\\
    &\leq\ \bY \bullet \bP & (\because \bY, \bN \succeq \bzero, \text{ so } \bY \bullet \bN \geq 0)\\
    &\leq\ \bD^2 \bullet \bP & (\because \bY \preceq \bD^2)\\
    &=\ \sum_{i=1}^m \frac{1}{2m} P(i, i) + \sum_{i=m+1}^p \frac{mn}{8\tilde{\tau}} P(i, i)\\
    &\leq\ \frac{1}{2}\beta + \frac{mn}{8\tilde{\tau}}\tau,
  \end{align*}
  since $P(i, i) \leq \beta$ for all $i$ and $\Tr(\bP) \leq \tau$. We also have
  \begin{align*}
    \bY \bullet \sym(\bW)\ &=\ \Tr(\bD \bU\bU^\top \bD \sym(\bW))\\
    &=\ \Tr(\bU\bU^\top \bD \sym(\bW)\bD)\\
    &=\ \frac{\sqrt{n}}{4\tilde{\tau}} \Tr(\bU\bU^\top\sym(\bW)) 
    &(\because \bD \sym(\bW) \bD = \sym(\tfrac{\sqrt{n}}{4\tilde{\tau}}\bW))\\
    &=\ \frac{\sqrt{n}}{4\tilde{\tau}} \Tr(\bU\bU^\top[\bU (\sqrt{n}\eye_m) \bU^\top + \bV (-\sqrt{n}\eye_m) \bV^\top])\\
    &=\ \frac{mn}{4\tilde{\tau}},
  \end{align*}
  since $\bU^\top\bV = \bzero$.
  Putting the above two inequalities together, we have
  \[ \frac{mn}{4\tilde{\tau}}\ \leq\ \frac{1}{2}\beta + \frac{mn}{8\tilde{\tau}},\]
  which implies that 
  \[\beta \tilde{\tau}\ \geq\ \frac{1}{4}mn\ =\ \frac{1}{4}\tau \sqrt{n}\] 
  as required.
\end{proof}

\section{Relation between $(\beta, \tau)$-decomposition, max-norm and trace-norm}
\label{sec:norm-connection}

In this section, we consider $m \times n$ non-symmetric matrix $\bW$. The max-norm of $\bW$ is defined to be (see \citet{LRSST}) the value of the following SDP:
\begin{align}
  \min\ &t \notag\\
  \left[ \begin{array}{cc}
\bY_1 & \bW\\
\bW^\top & \bY_2  \end{array} \right]\ &\succeq\ \bzero \notag\\
\forall i \in [m], j \in [n]:\ Y_1(i, i),\ Y_2(j, j)\ &\leq\ t. \label{eq:max-norm-sdp}
\end{align}

The least possible $\beta$ in any $(\beta, \tau)$-decomposition for $\bW$ is given by the following SDP:
\begin{align}
  \min\ &\beta \notag\\
  \left[ \begin{array}{cc}
\bzero & \bW\\
\bW^\top & \bzero  \end{array} \right]\ &=\ \bP - \bN \notag \\
\bP,\ \bN\ &\succeq\ \bzero \notag \\
\forall i \in [m + n]:\ P(i, i),\ N(i, i)\ &\leq\ \beta. \label{eq:beta-sdp}
\end{align}

\begin{Thm} \label{thm:max-norm-connection}
  The least possible $\beta$ in any $(\beta, \tau)$-decomposition exactly equals half the max-norm of $\bW$.
\end{Thm}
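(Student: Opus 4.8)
The plan is to prove the equality by showing the two SDPs are essentially the same optimization problem, via a direct back-and-forth conversion between feasible solutions of \eqref{eq:max-norm-sdp} and feasible solutions of \eqref{eq:beta-sdp}, with the objective scaling by a factor of $2$. Concretely, I would first establish the direction ``$\beta_{\min} \le \frac{1}{2}(\text{max-norm})$'': given an optimal solution $(\bY_1, \bY_2, t)$ of the max-norm SDP, the matrix $\bM = \left[\begin{smallmatrix} \bY_1 & \bW \\ \bW^\top & \bY_2\end{smallmatrix}\right]$ is PSD, so I can split it into its positive and negative eigenspace parts — but that is not quite what I want, since I need $\bP - \bN$ to equal $\sym(\bW)$, which has zero diagonal blocks. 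Instead, the right move is: $\bM \succeq \bzero$ lets me write $\sym(\bW) = \bM - \left[\begin{smallmatrix}\bY_1 & \bzero \\ \bzero & \bY_2\end{smallmatrix}\right]$, so set $\bP = \bM$ and $\bN = \left[\begin{smallmatrix}\bY_1 & \bzero \\ \bzero & \bY_2\end{smallmatrix}\right]$. Both are PSD, and the diagonal of each is bounded by $t$ (for $\bP$, the diagonal entries are exactly the diagonals of $\bY_1$ and $\bY_2$, each $\le t$; same for $\bN$). Hence this gives a valid $(\beta, \tau)$-decomposition with $\beta = t$, showing $\beta_{\min} \le t = $ max-norm. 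This gives a factor that is off by $2$ from the claim, so I need to be more careful — I should symmetrize: average $\bM$ with a ``flipped'' copy to halve the diagonal, or scale $\bW$ appropriately. The cleaner route: note $\left[\begin{smallmatrix}\bY_1 & -\bW \\ -\bW^\top & \bY_2\end{smallmatrix}\right] \succeq \bzero$ too (conjugate by $\mathrm{diag}(\bI_m, -\bI_n)$), so averaging the two PSD matrices gives $\left[\begin{smallmatrix}\bY_1 & \bzero \\ \bzero & \bY_2\end{smallmatrix}\right] \succeq \bzero$ and then $\sym(\bW) = \bM - \left[\begin{smallmatrix}\bY_1 & \bzero\\ \bzero & \bY_2\end{smallmatrix}\right]$ with both halves having diagonal $\le t$... still gives $\beta \le t$. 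So I think the actual statement wants $\beta_{\min} \le \frac{1}{2} t$; to get the factor of $\frac12$, I should instead take $\bP = \frac{1}{2}\bM$ and $\bN = \frac{1}{2}\left[\begin{smallmatrix}\bY_1 & \bzero\\\bzero&\bY_2\end{smallmatrix}\right] - \frac12\bM + \dots$ — no. The correct observation is that the PSD constraint $\bM \succeq \bzero$ with the $\bW$ off-diagonal block is equivalent (after rescaling rows/columns) to a statement about $\frac{1}{2}\sym(\bW)$; I would work out the exact scaling so the diagonal bound $t$ becomes $\frac{t}{2}$ on the decomposition side.

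For the reverse direction, ``$\frac{1}{2}(\text{max-norm}) \le \beta_{\min}$'': given a $(\beta, \tau)$-decomposition $\sym(\bW) = \bP - \bN$ with $P(i,i), N(i,i) \le \beta$, I want to build a feasible point of the max-norm SDP with objective $\le 2\beta$. Write $\bP$ and $\bN$ in block form with $m+n$ split, $\bP = \left[\begin{smallmatrix}\bP_{11} & \bP_{12}\\ \bP_{12}^\top & \bP_{22}\end{smallmatrix}\right]$, similarly $\bN$; then $\sym(\bW) = \bP - \bN$ forces $\bP_{11} = \bN_{11}$, $\bP_{22} = \bN_{22}$, and $\bP_{12} - \bN_{12} = \bW$. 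Consider $\bP + \bN \succeq \bzero$; its $(1,1)$ block is $2\bP_{11}$ (diagonal $\le 2\beta$), its $(2,2)$ block is $2\bP_{22}$, and its $(1,2)$ block is $\bP_{12} + \bN_{12}$, which is not $\bW$. To fix the off-diagonal block, conjugate $\bN$ by $\mathrm{diag}(\bI_m, -\bI_n)$ to flip the sign of $\bN_{12}$, obtaining $\bN' \succeq \bzero$ with $\bN'_{12} = -\bN_{12}$ and the same diagonal; then $\bP + \bN' \succeq \bzero$ has $(1,2)$ block $\bP_{12} - \bN_{12} = \bW$ and diagonal blocks with entries $\le 2\beta$. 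So $(\bY_1, \bY_2) = (2\bP_{11}, 2\bP_{22})$, $t = 2\beta$ is feasible for the max-norm SDP, giving $\text{max-norm} \le 2\beta$. Combining with the forward direction (once I pin down the scaling) yields the exact equality.

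The main obstacle I expect is getting the constant exactly right — tracking the factor of $2$ consistently between the two conventions, in particular deciding whether the $\frac12$ lives on the $\sym$ side (because $\sym$ of a rectangular matrix ``doubles'' things, cf.\ Lemma~\ref{lem:sym} where $\|\sym(\bW)\|_\star = 2\|\bW\|_\star$) or on the diagonal-bound side. The right bookkeeping is probably: in the forward direction, rescale so that $\bW$ in the max-norm block becomes $\frac12\bW$ appearing in $\frac12\sym(\bW)$, or equivalently absorb a $\sqrt{2}$ into the row/column scalings; both PSD matrices $\bP,\bN$ built from $\bM$ and its block-diagonal part then inherit diagonal bound $\frac{t}{2}$, giving $\beta_{\min}\le \frac{t}{2}$. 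Everything else — PSD-ness of $\bP+\bN'$, the sign-flip conjugation preserving PSD-ness and diagonals, the block-structure identities from $\sym(\bW) = \bP-\bN$ — is routine linear algebra. I would also remark that since both quantities are values of SDPs over the same affine slice, strong duality is not needed; the argument is entirely primal.
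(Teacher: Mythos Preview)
Your reverse direction is correct and is exactly the paper's argument: conjugate $\bN$ by $\diag(\bI_m,-\bI_n)$ to get $\bN'\succeq\bzero$ with off-diagonal block $-\bN_{12}$, then $\bP+\bN'$ is PSD with off-diagonal block $\bW$ and diagonal entries at most $2\beta$, so the max-norm is at most $2\beta$.

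The forward direction has a genuine gap. Your construction $\bP=\bM$, $\bN=\diag(\bY_1,\bY_2)$ gives only $\beta_{\min}\le t$, and your proposed remedy --- ``rescale so that $\bW$ becomes $\tfrac12\bW$'' or ``absorb a $\sqrt{2}$ into row/column scalings'' --- cannot work: any such scaling applied to both $\bP$ and $\bN$ scales their difference by the same factor, so you would end up decomposing $\tfrac12\sym(\bW)$ rather than $\sym(\bW)$. There is no scaling of your particular $\bP,\bN$ that recovers the missing factor of $2$.

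The fix is not a rescaling but a different choice of $\bN$, and you already have the needed ingredient in hand. You observed that the sign-flipped matrix
\[
\bM' \;=\; \left[\begin{array}{cc}\bY_1 & -\bW\\ -\bW^\top & \bY_2\end{array}\right]
\]
is PSD (conjugation by $\diag(\bI_m,-\bI_n)$). Instead of averaging $\bM$ and $\bM'$ to conclude the block-diagonal is PSD (which you do not need), take $\bP=\tfrac12\bM$ and $\bN=\tfrac12\bM'$. Then $\bP-\bN=\sym(\bW)$ on the nose, both are PSD, and every diagonal entry is $\tfrac12 Y_1(i,i)$ or $\tfrac12 Y_2(j,j)$, hence at most $t/2$. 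This is precisely the paper's construction and gives $\beta_{\min}\le t/2$.
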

\begin{proof} Let $t^*$ and $\beta^*$ be the optima of SDPs (\ref{eq:max-norm-sdp}) and (\ref{eq:beta-sdp}) respectively. Let $\bY_1,\ \bY_2$ be the optimal solution to SDP (\ref{eq:max-norm-sdp}), so that for all 
$i \in [m],\ j \in [n]$ we have $Y_1(i, i),\ Y_2(j, j) \leq t^*$. Consider the matrices
  \[ \bP = \frac{1}{2}\left[ \begin{array}{cc}
\bY_1 & \bW\\
\bW^\top & \bY_2  \end{array} \right] \text{ and } \bN = \frac{1}{2}\left[ \begin{array}{cc}
\bY_1 & -\bW\\
-\bW^\top & \bY_2  \end{array} \right] .\]
Using the feasibility of $\bY_1,\ \bY_2$ and Lemma~\ref{lem:psd-transform}, we get that $\bP, \bN \succeq \bzero$. Thus this is a feasible solution to SDP (\ref{eq:beta-sdp}). Hence, we conclude that $t^* \geq 2\beta^*$.

Now let $\bP,\ \bN$ be the optimal solution to SDP (\ref{eq:max-norm-sdp}), so that for all $i \in [m + n]$ we have $P(i, i),\ N(i, i) \leq \beta^*$. Consider the blocks of $\bP$ and $\bN$ formed by the first $m$ indices and the last $n$ indices:
\[ \bP = \left[ \begin{array}{cc}
\bP^A & \bP^B\\
\bP^C & \bP^D  \end{array} \right] \text{ and } \bN = \left[ \begin{array}{cc}
\bN^A & \bN^B\\
\bN^C & \bN^D  \end{array} \right].\]
Since $\bN \succeq \bzero$, by Lemma~\ref{lem:psd-transform} the following matrix is positive semidefinite as well:
\[ \bN'\ :=\ \left[ \begin{array}{cc}
\bN^A & -\bN^B\\
-\bN^C & \bN^D  \end{array} \right]\ \succeq\ \bzero.\]
So $\bP + \bN' \succeq \bzero$, i.e.
\[\bP + \bN'\ =\ \left[ \begin{array}{cc}
\bP^A + \bN^A & \bW\\
\bW^\top & \bP^D + \bN^D  \end{array} \right]\ \succeq\ \bzero.\]
Thus, $\bY_1 = \bP^A + \bN^A$ and $\bY_2 = \bP^D + \bN^D$ is a feasible solution to SDP (\ref{eq:max-norm-sdp}). Now for all $i \in [m]$ we have $Y_1(i, i) \leq P^A(i, i) + N^A(i, i) \leq 2\beta^*$, and similarly for all $j \in [n]$ we have $Y_2(j, j) \leq 2\beta^*$. Thus, we conclude that $t^* \leq 2\beta^*$.
\end{proof}

\begin{Lem} \label{lem:psd-transform}
  Let $\bP$ be a positive semidefinite matrix of order $m + n$ and let
  \[\bP\ =\ \left[ \begin{array}{cc}
\bP^A & \bP^B\\
\bP^C & \bP^D  \end{array} \right].\]
be the block decomposition of $\bP$ formed by the first $m$ indices and the last $n$ indices. Then the following matrix is positive semidefinite: 
\[\bP'\ :=\ \left[ \begin{array}{cc}
\bP^A & -\bP^B\\
-\bP^C & \bP^D  \end{array} \right].\]
\end{Lem}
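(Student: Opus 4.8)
The plan is to realize $\bP'$ as a congruence transform of $\bP$ by a diagonal signature matrix, so that positive semidefiniteness transfers for free. Concretely, I would introduce the block-diagonal matrix
\[ \bS\ =\ \left[\begin{array}{cc} \bI_m & \bzero \\ \bzero & -\bI_n \end{array}\right]\ \in\ \reals^{(m+n)\times(m+n)}, \]
and first record that $\bS$ is symmetric and satisfies $\bS^2 = \bI$, hence $\bS$ is orthogonal and in particular invertible.

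Next I would compute $\bS\bP\bS^\top = \bS\bP\bS$ block by block: the top-left block is $\bI_m \bP^A \bI_m = \bP^A$, the top-right block is $\bI_m \bP^B(-\bI_n) = -\bP^B$, the bottom-left block is $(-\bI_n)\bP^C\bI_m = -\bP^C$, and the bottom-right block is $(-\bI_n)\bP^D(-\bI_n) = \bP^D$; therefore $\bS\bP\bS^\top = \bP'$. Finally, for any vector $\bx$ we have $\bx^\top \bP' \bx = \bx^\top \bS\bP\bS^\top\bx = (\bS^\top\bx)^\top \bP (\bS^\top\bx) \geq 0$ since $\bP \succeq \bzero$, so $\bP' \succeq \bzero$. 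There is essentially no obstacle here; the only point requiring any care is the sign bookkeeping for the off-diagonal blocks, and the choice of $\bS$ makes that automatic. (Equivalently, one could note that $\bP$ is a Gram matrix and $\bP'$ is the Gram matrix obtained by negating the last $n$ of the generating vectors, an operation that preserves being a Gram matrix.)
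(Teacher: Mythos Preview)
Your proof is correct and is essentially the same idea as the paper's: the paper writes $\bP$ as a Gram matrix of vectors $\bv_i$ and observes that negating the last $n$ vectors yields a Gram matrix equal to $\bP'$, which is exactly the congruence $\bP' = \bS\bP\bS^\top$ with $\bS = \diag(\bI_m,-\bI_n)$ that you use (and you even mention the Gram-vector formulation yourself).
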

\begin{proof}
  Since $\bP \succeq \bzero$, there are vectors $\bv_i$, for all $i, j \in [m + n]$ such that $P(i, j) = \bv_i \cdot \bv_j$. Then consider the vectors 
  \[ \bw_i\ :=\ \begin{cases}
    \bv_i & \text{ if } i \in [m]\\
    -\bv_i & \text{ otherwise.}
  \end{cases}\]
  It is easy to check that for all $i, j \in [m + n]$ we have $P'(i, j) = \bw_i \cdot \bw_j$. Thus, we conclude that $\bP' \succeq \bzero$. 
\end{proof}

Finally, we show the connection between the trace-norm and the least possible $\tau$ in any $(\beta, \tau)$-decomposition:
\begin{Thm} \label{thm:trace-norm-connection}
  The least possible $\tau$ in any $(\beta, \tau)$-decomposition exactly equals twice the trace-norm of $\bW$.
\end{Thm}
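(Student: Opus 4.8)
The plan is to reduce this to the standard fact that, for a symmetric matrix, the cheapest way (measured by sum of traces) to write it as a difference of two positive semidefinite matrices costs exactly its trace norm, and then pass from $\sym(\bW)$ back to $\bW$ via \lemref{lem:sym}. Concretely, write $\bY = \sym(\bW)$; in any $(\beta,\tau)$-decomposition the quantity being bounded is $\Tr(\bP) + \Tr(\bN)$ subject only to $\bY = \bP - \bN$ with $\bP, \bN \succeq \bzero$ (the diagonal constraint affects only $\beta$, not which values of $\tau$ are attainable). So it suffices to prove that $\min\{\Tr(\bP)+\Tr(\bN):\ \bP - \bN = \bY,\ \bP, \bN \succeq \bzero\} = \|\bY\|_\star$, and then invoke $\|\sym(\bW)\|_\star = 2\|\bW\|_\star$.

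For the lower bound I would use the eigendecomposition $\bY = \sum_i \lambda_i \bv_i \bv_i^\top$ with $\{\bv_i\}$ orthonormal. For each $i$, $\lambda_i = \bv_i^\top \bY \bv_i = \bv_i^\top \bP \bv_i - \bv_i^\top \bN \bv_i$, and each of $\bv_i^\top \bP \bv_i$ and $\bv_i^\top \bN \bv_i$ is non-negative since $\bP, \bN \succeq \bzero$, so $|\lambda_i| \le \bv_i^\top \bP \bv_i + \bv_i^\top \bN \bv_i$. Summing over $i$ and using the identity $\sum_i \bv_i \bv_i^\top = \eye$ (hence $\sum_i \bv_i^\top \bP \bv_i = \Tr(\bP)$, and likewise for $\bN$) gives $\|\bY\|_\star = \sum_i |\lambda_i| \le \Tr(\bP) + \Tr(\bN) \le \tau$. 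Combined with \lemref{lem:sym}, this shows that every $(\beta,\tau)$-decomposition of $\bW$ has $\tau \ge 2\|\bW\|_\star$.

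For attainability I would invoke \thmref{thm:matrix-decomp} on $\bY = \sym(\bW)$, whose entries lie in $[-1,1]$: it produces $\bP, \bN \succeq \bzero$ with $\bP - \bN = \sym(\bW)$, diagonal entries at most $\sqrt{p}$, and $\Tr(\bP) + \Tr(\bN) = \|\sym(\bW)\|_\star = 2\|\bW\|_\star$; taking $\beta = \max(1,\sqrt{p})$ yields a legitimate $(\beta, 2\|\bW\|_\star)$-decomposition, so the bound $2\|\bW\|_\star$ is achieved. (Alternatively, one can simply take $\bP$ and $\bN$ to be the positive and negative parts of $\bY$ read off from its eigendecomposition, whose diagonal entries are trivially bounded, e.g.\ by $\|\bY\|_\star$.) There is no serious obstacle here; the only points needing care are tracking the factor of $2$ introduced by symmetrization and the orthonormal-basis identity $\sum_i \bv_i \bv_i^\top = \eye$ used in the trace computation for the lower bound.
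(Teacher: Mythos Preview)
Your proof is correct and follows essentially the same route as the paper: both arguments establish the lower bound $\tau \ge \|\sym(\bW)\|_\star = 2\|\bW\|_\star$ and then exhibit attainment via the positive/negative parts of the eigendecomposition of $\sym(\bW)$ (which is exactly what \thmref{thm:matrix-decomp} constructs). The only cosmetic difference is in the lower bound: the paper observes directly that $\|\sym(\bW)\|_\star \le \|\bP\|_\star + \|\bN\|_\star = \Tr(\bP) + \Tr(\bN)$ by the triangle inequality for the trace norm, whereas you reprove this inequality by hand via the eigenvector identity $|\lambda_i| \le \bv_i^\top\bP\bv_i + \bv_i^\top\bN\bv_i$ summed over an orthonormal basis. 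Both are fine; the triangle-inequality phrasing is a shade cleaner.
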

\begin{proof}
  Let $\tau^*$ be the least possible value of $\tau$ in any $(\beta, \tau)$-decomposition, and let $\bP, \bN$ be positive semidefinite matrices such that $\sym(\bW)= \bP - \bN$  and $\Tr(\bP) + \Tr(\bN) = \tau^*$. Then by triangle inequality, we have
  \[ \|\sym(\bW)\|_\star\ \leq\ \|\bP\|_\star + \|\bN\|_\star.\]
  Since $\|\sym(\bW)\|_\star = 2\|\bW\|_\star$, $\|\bP\|_\star = \Tr(\bP)$, and $\|\bN\|_\star = \Tr(\bN)$, we conclude that $\tau^* \geq 2\|\bW\|_\star$. Now, let
\[\sym(\bW) = \sum_i \lambda_i \bv_i\bv_i^\top\]
be the eigenvalue decomposition of $\sym(\bW)$. Now consider the positive semidefinite matrices
\[\bP = \sum_{i:\ \lambda_i \geq 0} \lambda_i \bv_i\bv_i^\top \text{ and } \bN 
= \sum_{i:\ \lambda_i < 0} -\lambda_i \bv_i\bv_i^\top.\]  Clearly $\sym(\bW) = \bP - \bN$, and 
\[\Tr(\bP) + \Tr(\bN)\ =\ \sum_i |\lambda_i|\ =\ \|\sym(\bW)\|_\star\ =\ 2\|\bW\|_\star.\]
Hence, $\tau^* \leq 2\|\bW\|_\star$, completing the proof.
\end{proof}
\end{document}